%%%%%%%% ICML 2024 EXAMPLE LATEX SUBMISSION FILE %%%%%%%%%%%%%%%%%

\documentclass{article}

% Recommended, but optional, packages for figures and better typesetting:
\usepackage{microtype}
\usepackage{graphicx}
\usepackage{subfigure}
\usepackage{booktabs} % for professional tables
\usepackage{caption}
\usepackage{subcaption}
\usepackage{wrapfig}

\newcommand\sA{\ensuremath{\mathcal{A}}}

\newcommand\sR{\ensuremath{\mathcal{R}}}
\newcommand\sS{\ensuremath{\mathcal{S}}}

\newcommand{\embed}[1]{\ensuremath{\phi(#1)}}
\newcommand{\valuematrix}[0]{\ensuremath{W_\text{V}}}
\newcommand{\kq}[0]{\ensuremath{W_\text{KQ}}}
\newcommand{\ntp}[4]{\ensuremath{f(#3,#4;#1,#2)}}
\newcommand{\dpre}[0]{\ensuremath{D_\text{pre}}}
\newcommand{\transpose}[1]{\ensuremath{#1^\top}}
\newcommand{\salience}[3]{\ensuremath{S(#1,#2,#3)}}
\newcommand{\srel}[0]{\ensuremath{s_\text{rel}}}
\newcommand{\ansset}[1]{\ensuremath{\mathcal{A}^{#1}}}
\newcommand{\prel}[0]{\ensuremath{p_\text{rel}}}
\newcommand{\subjsect}[0]{\ensuremath{\mathcal{S}}}
\newcommand{\tokenset}[0]{\ensuremath{\mathcal{T}}}
\newcommand{\fextended}[1]{\ensuremath{f(#1, W^Q, W^K,W^V)}}
\newcommand{\Att}[1]{\ensuremath{\text{Att}_{#1}}}
\newcommand{\keymatrix}[0]{\ensuremath{W^K}}
\newcommand{\softmax}[1]{\ensuremath{\sigma(#1)}}
\newcommand{\querymatrix}[0]{\ensuremath{W^Q}}
\newcommand{\selfatt}[3]{\ensuremath{\text{Self-Att}(#3; #1, #2)}}
\newcommand{\selfattext}[4]{\ensuremath{\text{Self-Att}(#4; #1,#2,#3)}}
\newcommand{\partialderiv}[2]{\ensuremath{\frac{\partial#1}{\partial#2}}}

\newcommand{\att}[1]{\ensuremath{\text{Att}_{#1}}}

% hyperref makes hyperlinks in the resulting PDF.
% If your build breaks (sometimes temporarily if a hyperlink spans a page)
% please comment out the following usepackage line and replace
% \usepackage{icml2024} with \usepackage[nohyperref]{icml2024} above.
\usepackage{hyperref}

% Attempt to make hyperref and algorithmic work together better:

% Use the following line for the initial blind version submitted for review:
\usepackage[accepted]{icml2024}

% If accepted, instead use the following line for the camera-ready submission:
% \usepackage[accepted]{icml2024}

% For theorems and such
\usepackage{amsmath}
\usepackage{physics}
\usepackage{amssymb}
\usepackage{mathtools}
\usepackage{amsthm}

\DeclareMathOperator*{\argmax}{arg\,max}

% if you use cleveref..
\usepackage[capitalize,noabbrev]{cleveref}

%%%%%%%%%%%%%%%%%%%%%%%%%%%%%%%%
% THEOREMS
%%%%%%%%%%%%%%%%%%%%%%%%%%%%%%%%
\theoremstyle{plain}
\newtheorem{theorem}{Theorem}[section]

\theoremstyle{definition}
\newtheorem{definition}[theorem]{Definition}
\newtheorem{assumption}[theorem]{Assumption}
\theoremstyle{remark}

% Todonotes is useful during development; simply uncomment the next line
%    and comment out the line below the next line to turn off comments
%\usepackage[disable,textsize=tiny]{todonotes}
\usepackage{bbm}
\usepackage[textsize=tiny]{todonotes}

% The \icmltitle you define below is probably too long as a header.
% Therefore, a short form for the running title is supplied here:
\icmltitlerunning{Understanding Finetuning for Factual Knowledge Extraction}

\begin{document}

\twocolumn[
\icmltitle{Understanding Finetuning for Factual Knowledge Extraction}

% It is OKAY to include author information, even for blind
% submissions: the style file will automatically remove it for you
% unless you've provided the [accepted] option to the icml2024
% package.

% List of affiliations: The first argument should be a (short)
% identifier you will use later to specify author affiliations
% Academic affiliations should list Department, University, City, Region, Country
% Industry affiliations should list Company, City, Region, Country

% You can specify symbols, otherwise they are numbered in order.
% Ideally, you should not use this facility. Affiliations will be numbered
% in order of appearance and this is the preferred way.
\icmlsetsymbol{equal}{*}

\begin{icmlauthorlist}
\icmlauthor{Gaurav Ghosal}{cmu}
\icmlauthor{Tatsunori Hashimoto}{stanford}
\icmlauthor{Aditi Raghunathan}{cmu}

%\icmlauthor{}{sch}
%\icmlauthor{}{sch}
\end{icmlauthorlist}

\icmlaffiliation{cmu}{Department of Machine Learning, Carnegie Mellon University, Pittsburgh, USA}
\icmlaffiliation{stanford}{Department of Computer Science, Stanford University, Stanford, USA}
\icmlcorrespondingauthor{Gaurav Ghosal}{gghosal@andrew.cmu.edu}

% You may provide any keywords that you
% find helpful for describing your paper; these are used to populate
% the "keywords" metadata in the PDF but will not be shown in the document
\icmlkeywords{Large Language Models, Factuality}

\vskip 0.3in
]
\printAffiliationsAndNotice{}
% this must go after the closing bracket ] following \twocolumn[ ...

% This command actually creates the footnote in the first column
% listing the affiliations and the copyright notice.
% The command takes one argument, which is text to display at the start of the footnote.
% The \icmlEqualContribution command is standard text for equal contribution.
% Remove it (just {}) if you do not need this facility.

%\printAffiliationsAndNotice{}  % leave blank if no need to mention equal contribution
 % otherwise use the standard text.
\begin{abstract}
In this work, we study the impact of QA fine-tuning data on downstream factuality. We show that fine-tuning on lesser-known facts that are poorly stored during pretraining yields significantly worse factuality than fine-tuning on well-known facts, even when all facts are seen during pretraining. 
We prove this phenomenon theoretically, showing that training on lesser-known facts can lead the model to ignore subject entity names and instead output a generic plausible response \emph{even when the relevant factual knowledge is encoded in the model}. On three question answering benchmarks (PopQA, Entity Questions, and MMLU) and two language models (Llama-2-7B and Mistral-7B), we find that (i) finetuning on a completely factual but lesser-known subset of the data deteriorates downstream factuality (5-10\%) and (ii) finetuning on a subset of better-known examples matches or outperforms finetuning on the entire dataset. Ultimately, our results shed light on the interaction between pretrained knowledge and finetuning data and demonstrate the importance of taking into account \emph{how} facts are stored in the pretrained model when fine-tuning for knowledge-intensive tasks.
\end{abstract}

\section{Introduction}
%%%Sketched out intro 
Large language models store large amounts of factual knowledge in their weights during pretraining \cite{jiang2020know, petroni2019language, mallen2023trust}. As a result, they have shown promise on a variety of knowledge intensive tasks, including factual question-answering \cite{roberts-etal-2020-much, radford2019language}. However, these abilities are unreliable and language models are prone to generate plausible, but incorrect responses to queries \cite{huang2023survey}. 

A natural avenue to improve factuality is via fine-tuning, as studied in several recent works~\cite{kazemi2023understanding,joshi2023personas, ouyang2022training,tian2023finetuning, yang2023alignment}. Multiple works, however, have shown that language models answer questions incorrectly even when they know the right answer, suggesting that current approaches to fine-tuning may be suboptimal \cite{burns2022discovering,li2023inferencetime,liu2023cognitive}. In order to achieve better fine-tuning or uncover the ceiling of such approaches, we need to understand what factors determine the performance of fine-tuning. What is the mechanism by which fine-tuning improves factuality?

We can distill prior understanding of this question into three factors. \citet{joshi2023personas} posits that fine-tuning on truthful data influences the model to adopt a credible \emph{persona}. This theory suggests that ensuring the \emph{factual accuracy} of the finetuning data is sufficient for downstream factuality. Another view from~\citet{kazemi2023understanding} and \citet{allenzhu2023physics} is that fine-tuning familiarizes the pretrained model with the QA format, which varies from the way that facts are observed during pretraining. This implies that finetuning examples should cover question formats likely to be seen during testing. Finally, \citet{schulmantalk} and \citet{yang2023alignment} hypothesize that fine-tuning examples must be drawn from facts that the model sees during pretraining.

In this work, we find that the impact of fine-tuning examples depends on \emph{how well} they are stored in the model, beyond simply their factuality or whether they are grounded in the pretraining corpus. Concretely, fine-tuning on QA examples about facts that the pretrained model knows well significantly improves factuality. Conversely, fine-tuning on QA examples regarding less well-encoded facts \emph{actively harms} downstream factuality, causing the model to incorrectly respond to questions it could otherwise get right. We make this finding in a synthetic setting, after ensuring that all QA examples are factually accurate, representative of the downstream task, and seen during pretraining.

Why does the encoding of facts seen in finetuning affect factuality downstream? We propose the following intuitive mechanism. When presented with a factual question, a model can either respond using relevant memorized knowledge or leverage more general ``shortcuts" that enable it to propose a plausible, but incorrect response. For example, when asked about a person's occupation, a language model could potentially take the shortcut of responding with a word that is generally associated with occupations (i.e. actor). If 
shortcut usage is reinforced during fine-tuning, this can drown out the influence of memorized knowledge, causing the model to behave less factually on test data. Our observations suggest that the composition of the fine-tuning data controls which mechanism is amplified: less well-known facts can lead to more aggressive use of shortcuts. We conceptually illustrate our hypothesis in Figure \ref{fig:conceptual}.

In Section \ref{sec:thy}, we prove this intuition in a one-layer transformer. We introduce a quantity termed \emph{factual salience} that measures how well a fact is learned by the one-layer transformer. Next, we demonstrate that a one-layer transformer can resort to using shortcuts through \emph{attention imbalance}: attending only to more general tokens (for example those that specify the question type) rather than the specific entities in the question. We prove that \emph{fine-tuning gradients on less salient facts contribute to the formation of attention imbalance}, while those on more salient facts counteract it. Furthermore, we show the effect of attention imbalance is amplified when looking at downstream performance on less well-known facts. Our results have a counterintuitive consequence: for less well-known facts, it is \emph{worse} to fine-tune on similar less well-known facts and better to fine-tune on a different distribution of more well-known facts.

We test the implications of our analysis on three real-world QA datasets (PopQA, MMLU, and Entity Questions) and two LLM models (Llama-2-7B and Mistral-7B). As predicted by our theory, we find that fine-tuning on well-known knowledge (top 50\%) outperforms fine-tuning on less well-known knowledge (bottom 50\%) by $7\%$ on MMLU, $6\%$ on PopQA, and $4\%$ on EntityQuestions. Moreover, we can match the performance of fine-tuning on the entire dataset by finetuning on just the top $50\%$. On MMLU, we find that finetuning on the top $30\%$ well-known facts \emph{outperforms} finetuning on the entire dataset by up to 2.5\%. 

To summarize, via theory and experiments, we uncover an important factor that determines the effect of finetuning on downstream factuality---how well the finetuned facts are encoded in the pretrained model. Beyond a conceptual understanding, our findings have immediate practical considerations for finetuning data curation: it can suffice to focus on a smaller number of well-known facts even when trying to improve factuality on less well-known facts. 

\begin{figure}
    \centering
    \includegraphics[width = 0.5\textwidth]{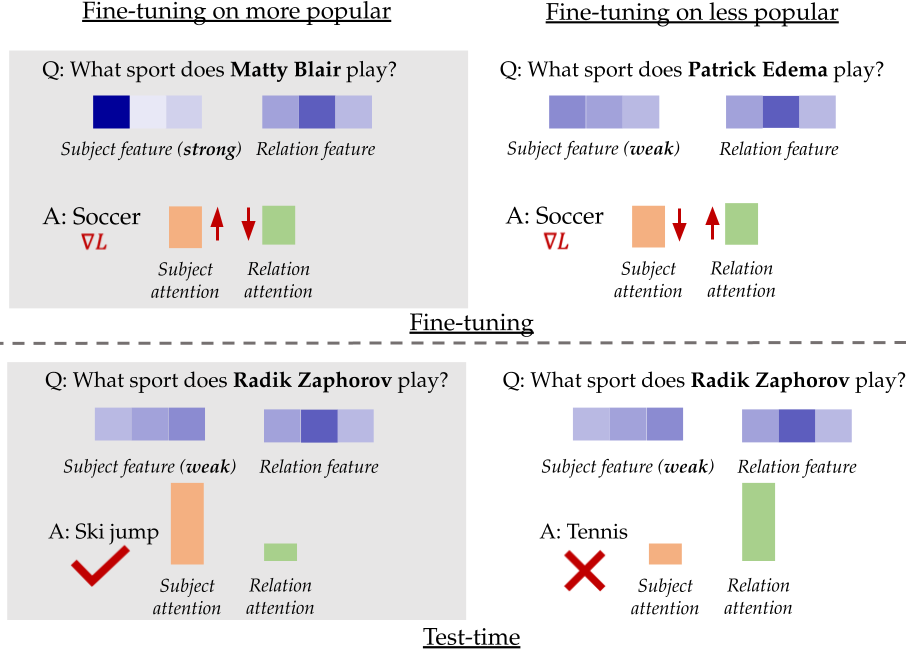}
    \caption{\textbf{Conceptual Mechanism of Finetuning on Popular versus Unpopular Knowledge}. When finetuning on less popular knowledge, the model can learn to heavily upweight relation features which enables it to make a plausible guess about the correct answer. However, training on popular, well-encoded facts discourages this imbalance. At testing time, heavy reliance on relation features can result in less popular knowledge being overwritten.}
    \label{fig:conceptual}
\end{figure}

\section{Preliminaries and Setup}
\label{sec:prelims}
Language models are presented with large quantities of factual knowledge during pretraining, for example in books and articles in the pretraining corpora \cite{jiang2020know, petroni2019language, mallen2023trust}. When users interact with language models, however, it is most desirable for them to extract knowledge in a QA format, which varies from how facts are seen in pretraining. As a result, LLMs must undergo finetuning to learn to apply their pretrained knowledge to these downstream QA tasks. Here, we introduce a formalization of this process which guides our synthetic experiments (Section \ref{sec:synth}) and theory (Section \ref{sec:thy}).

\textbf{Definition of Factual Knowledge} Following prior works  \cite{petroni2019language, elsahar2018t}, we abstractly represent a fact as the mapping from a subject-entity $s$ and relation-type $r$ to an answer $a$. We can represent these mappings as \emph{knowledge triples} $(s,r,a)$ where $s \in \sS$, $r \in \sR$, and $a \in \sA$ and $\sS$, $\sR$, $\sA$ are the set of all subject entities, relations, and answers, respectively. Importantly a single $(s,r,a)$ triple can be expressed in multiple ways in natural language. Here, we model a natural language as the set of sequences of tokens lying in a token set $\mathcal{T}$.

\textbf{Knowledge Formatting Functions} Previously, we presented a definition of factual knowledge but observed that a fact can be presented textually in many formats. We formalize this intuition by introducing the notion of a \emph{formatting function} $g:\sS \times \sR \times \sA \rightarrow \mathcal{T}^{k}$ which maps an $(s,r,a)$ triple to a series of tokens lying in the set $\mathcal{T}$. One such $g$, for example, could map the knowledge triple (USA, capital, Washington D.C.) to the tokenization of the sentence ``The capital of the USA is Washington D.C."

\textbf{Pretraining and Fine-tuning} Now, we are ready to formalize the interaction of the pretraining and finetuning stages. Given a set of knowledge triples $D_{\text{k}}=\{(s,r,a)_{i=1}^{N}\}$ and a \emph{pretraining formatting function}, we generate a pretraining corpus $\dpre = \{g_{\text{pre}}(s,r,a) | (s,r,a) \in D_{\text{k}}\}$. Next, for a \emph{downstream formatting function} $g_\text{down}$, we generate a downstream dataset $D_\text{down} = \{g_{\text{down}}(s,r,a)|(s,r,a) \in D_\text{k}\}$. In practice, the finetuning dataset is often limited relative to pretraining so we partition $D_\text{down}$ into $D_\text{ft}$ and $D_\text{eval}$ and use  $D_\text{ft}$ for finetuning and $D_\text{eval}$ as a held-out test set.

In QA settings, $g_\text{pre}$ presents facts as they would be seen in books and articles, while $g_\text{down}$ presents facts as question-answer pairs (i.e. "What is the capital of the USA? Washington D.C."). The goal of QA finetuning is thus to enable facts observed in the pretraining format to be extracted by prompting in question-answering (QA) format.

\section{Synthetic Experiments}
\label{sec:synth}

In this section, we study the role of fine-tuning data on factuality in a synthetic setup. This setup allows us to investigate the role of the pretraining process, which would be impractical to do in real large language models. 

\subsection{Synthetic Setup} 
We consider the following simulated setup based on the formalism introduced in Section \ref{sec:prelims}. We consider that there is a single token for each subject, relation, and answer. We take $g_\text{pre}(s,r,a) = (s,r,a)$ (i.e. the pretraining formatting function simply maps to the sequence of subject, relation, and answer tokens). To simulate the change in formatting that occurs in downstream tasks, we introduce a \emph{QA-prompt} token  $p_{r}$ for each relation type. The QA-prompt tokens are unseen during pretraining but used in the downstream formatting function: $g_{\text{down}}(s,r,a) = (s, p_{r}, a)$. Thus, during finetuning, the language model must learn to respond to a prompt $(s,p_{r})$ as if it had been prompted with $(s,r)$. Our token space is thus $\mathcal{T} = \sS \cup \sR \cup \sA \cup \{p_{r} | r \in \sR\}$.

During pretraining,  $(s,r,a)$ triples are sampled i.i.d. from the distribution $s \sim \text{Zipf}(\sS), r \sim \text{Unif}(\sR)$ at each step. This modeling choice simulates the fact that pretraining corpora often contain both very popular entities as well as many obscure, rarely seen ones. During fine-tuning, however, we perform standard batch based training on $D_\text{ft}$. We assume that all knowledge sequences presented to the model (in both pretraining and downstream formats) are consistent with the ground truth $(s,r,a)$ triples in $D_\text{k}$. This allows us to study the role of finetuning data \emph{beyond factual correctness} as is the focus of prior work \cite{joshi2023personas}.

Finally, we emphasize that all facts in the downstream fine-tuning ($D_\text{ft}$) and test datasets ($D_\text{eval}$) are present in \dpre. As a result, our simulation results do not arise from the impact of finetuning on new knowledge as has been hypothesized in prior works \cite{schulmantalk}. 

\label{subsec:synthsetup}
\label{sec:assumptions}

\begin{figure*}[t!]
\vskip 0.2in
\centering
\subfigure[Impact of Finetuning Dataset]{
\centering 
    \includegraphics[width =0.25\textwidth]{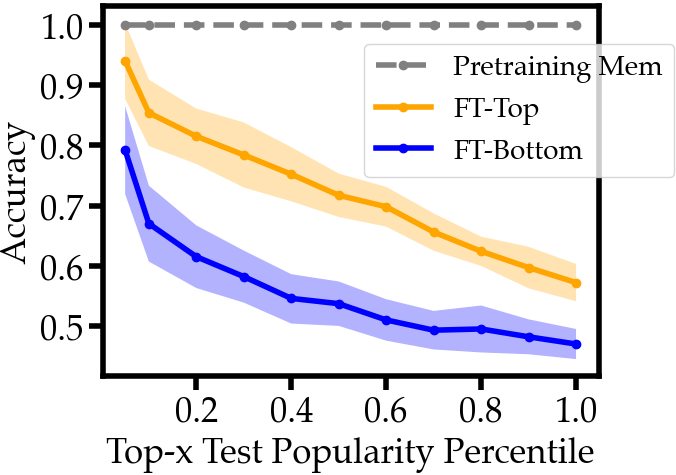}
    \label{subfig:main}
}
\hfill
\subfigure[Effect of Zipf Alpha]{
    \centering
    \includegraphics[width = 0.23\textwidth]{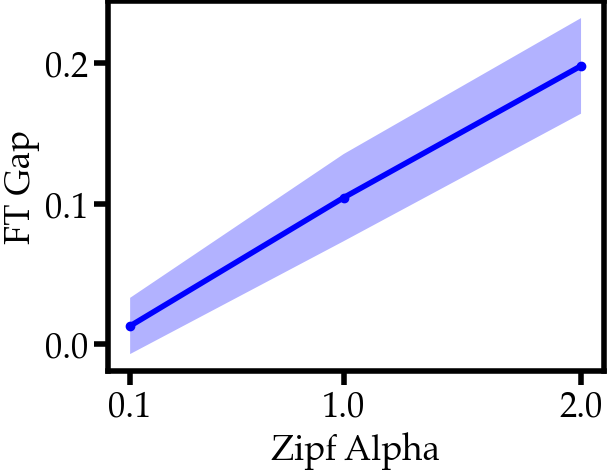}
    \label{subfig:zipfparam}
}
\hfill
\subfigure[Effect of Pretraining Steps]{
\centering
    \includegraphics[width = 0.23\textwidth]{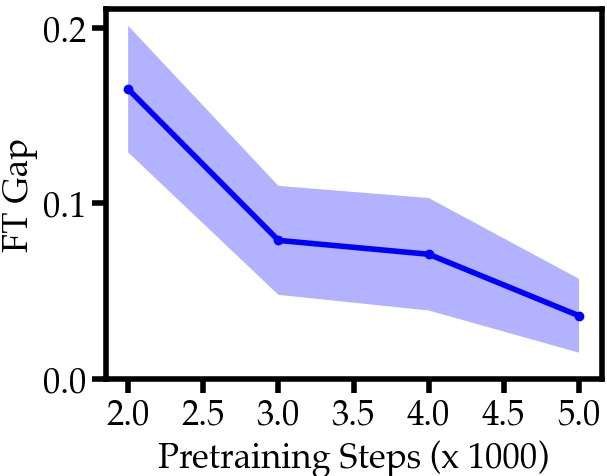}
    \label{subfig:ftsteps}
}
\caption{\textbf{Simulation Study of Finetuning for Knowledge Extraction} (a) We plot the downstream factuality of finetuning on more versus less popular facts, finding that finetuning on more popular facts improves downstream factuality (b) We plot the difference between finetuning on \texttt{FT-Top} and \texttt{FT-Bottom} as a function of the subject Zipf parameter. We find that on increasingly \emph{long-tailed}  datasets, the impact of finetuning dataset is amplified.  (c)  We plot the difference between finetuning on \texttt{FT-Top} and \texttt{FT-Bottom} as a function of pretraining steps, finding that the difference between the finetuning datasets is mitigated with additional training.}
\label{fig:synthetic}
\vskip -0.2in
\end{figure*}

\subsection{Observations in Simulation}

\textbf{Main Finding: Fine-tuning Fact Popularity Impacts Downstream Performance} In Figure \ref{subfig:main}, we plot the accuracy of training on the most popular (\texttt{FT-Top}) and least popular (\texttt{FT-Bottom}) entities in the finetuning dataset. We find that the choice of finetuning dataset \emph{significantly} impacts downstream QA factuality. Concretely, fine-tuning on examples corresponding to the most popular facts in pretraining results in a 10\% improvement in factuality. This difference is amplified as we include relatively less popular data in the test set. For example, the difference between \texttt{FT-Top} and \texttt{FT-Bottom} doubles when we extend our test set from the top $5\%$ to the top $10\%$ most popular entities and persists as we include increasingly unpopular entities. 

\paragraph{Impact of Long-Tailedness in Pretraining Corpus} In Figure \ref{subfig:zipfparam}, we examine the impact of the Zipf $\alpha$ parameter on this phenomena. Intuitively, as $\alpha$ increases, the difference in frequency between more and less popular facts increases. On the other hand, lower $\alpha$ values result in a more uniform distribution over facts. We observe that increasing $\alpha$ exacerbates the differences between the fine-tuning datasets, whereas lowering $\alpha$ largely closes the gap. These findings suggest that differing impacts of the fine-tuning datasets is correlated with how significantly facts differ in their pretraining frequency.

\paragraph{Impact of the Number of Pretraining Steps
} Previously, we observed that the \emph{long-tailedness} of the pretraining distribution controls sensitivity to the fine-tuning dataset. One hypothesis to explain this could be that less frequent facts might not be stored in the model, but we observe that the gap between more and less popular facts is present even when all facts can be extracted in $(s,r)$ form, as evidenced by the pretraining memorization accuracy in Figure \ref{subfig:main}. This suggests that the gap is driven primarily by differences in the internal \emph{fact-storage}. In Figure \ref{subfig:ftsteps}, we investigate this by plotting the gap between \texttt{FT-Top} and \texttt{FT-Bottom} as a function of pretraining steps. We find that with more pretraining steps, the gap decreases, indicating that these internal differences in fact storage disappear as all facts are seen a sufficient number of times. However, achieving this in real large language models would likely be impractical due to the large scale of pretraining data, and the regime of practical interest shows vast difference in downstream performance depending on the finetuning distribution.

\subsection{Conceptual Model: Factual Salience}

Our findings in simulation suggest a \emph{continuous progression} in whether a model ``knows" a particular fact. This controls how well a fact can be extracted in downstream settings, as seen in the decline of downstream accuracy with popularity in Figure \ref{subfig:main}. Moreover, the extent to which the model knows a fact also determines its behavior in finetuning, as evidenced by the gap between \texttt{FT-Top} and \texttt{FT-Bottom} in Figure \ref{subfig:main}. We refer to this intuitive quantity of how well a model knows a fact as the \emph{fact salience} and provide a formal analysis in Section \ref{sec:thy}.

Our simulated results indicate that fact salience is related to the frequency of facts in the pretraining corpus. In particular, we see that differences in the salience are exacerbated as the pretraining distribution becomes more \emph{long-tailed}. However, we also find that these differences are mitigated with additional pretraining, suggesting that they are driven primarily by facts that have been seen only a few times. Importantly, this matches the regime of typical language model training, where roughly \emph{single-epoch} training is performed over a diverse and long-tailed pretraining corpus. In this setting, many facts are likely to be seen only a few times, since multiple passes are not performed.

\section{Theoretical Analysis of Factual Salience}
\label{sec:thy}

In the previous section, we intuitively introduced \emph{fact salience} and hypothesized that it plays a central role in factuality. We now formalize this intuition in a one-layer transformer. We give a quantitative definition of fact salience in this simplified setting (Section \ref{subsec:attimbalance}) and justify its relationship to downstream fact extractability (Theorem \ref{thm:hiddeninfosec4}). Next, we demonstrate that fine-tuning on less salient facts can suppress pretrained knowledge (Theorem \ref{thm:attndynamicssec4}). Finally, we prove that the factual salience increases as a fact is seen during pretraining, justifying the use of pretraining frequency as a proxy (Section \ref{subsec:freqissalience}). In Appendix \ref{app:numerical_experiment_1_layer}, we validate our theory numerically.

\textbf{Simplified Model}
We analyze a one-layer, single-headed transformer model with fixed, orthogonal token embeddings (denoted as $\phi(t)$ for token $t \in \mathcal{T}$). Our model has two learnable parameters: the value matrix, $\valuematrix$, and the key-query matrix $\kq$ and we assume that $\valuematrix, \kq \in \mathbb{R}^{|\mathcal{T}| \times |\mathcal{T}|}$ (i.e. the value, key and query projections preserve the dimensions of the embeddings). We fix the output head to be the identity operation. The forward pass of our model on the input sequence $(s,r)$ can thus be decomposed as follows:
 \begin{equation*}
    X = \begin{bmatrix} \embed{s} & \embed{r} \end{bmatrix}
\end{equation*}
In the first step, the token sequence is embedded and concatenated to an input embedding matrix $X$.
\begin{equation*}
    \selfatt{\valuematrix}{\kq}{X} = \valuematrix X \sigma(X \kq X_{-1}) 
\end{equation*}
Next, the input embedding matrix passes through a single head of self-attention to compute the last-token activation. 
\begin{equation*}
    \ntp{\valuematrix{}}{\kq{}}{s}{r} = \sigma(\selfatt{\valuematrix}{\kq}{X})
 \end{equation*}
Finally, we compute a probability distribution over the \emph{next token} $\ntp{\valuematrix}{\kq}{s}{r}$ as a softmax over the last output of the self-attention layer. An extended analysis of this model is provided in Appendix \ref{app:notations}.
% \ar{It is usually something like $\fpre(s, r; \valuematrix{}, \kq)$. Also, can you remove square brackets on $[s, r]$ --- do we really need them?}

\textbf{Remark.} Since $\valuematrix$ is full rank, the parameterization above is sufficiently expressive to achieve 100\% $\textrm{argmax}$ decoding accuracy (as described in Appendix \ref{app:notations}) on any pretraining dataset where every $(s,r)$ has a unique answer (see Appendix \ref{app:memorizationproof} for proof).

\subsection{Quantifying Factual Salience}
\label{subsec:thyfactsalience}
In Section \ref{sec:synth}, we hypothesized that facts are stored with different strengths in the model weights after pretraining, impacting both their extractability and their behavior in finetuning. In this section, we explicitly quantify this strength in a one-layer transformer.
\begin{definition}[Fact Salience]
    For a fact $(s,r,a)$, we define the fact salience \salience{s}{r}{a} as $\transpose{\embed{a}} \valuematrix{} \embed{s}$.
\end{definition}
Since we fix the model's output transformation to be the identity, $\valuematrix \embed{s}$ can be viewed as encoding an un-normalized probability distribution over the next token, \emph{conditioned only on} $s$. Thus, \salience{s}{r}{a} measures how well the model "stores" the correct answer in relation to $s$. Intuitively, for the fact to be extractable downstream (when prompted by $(s, p_{r})$), the model can only rely on information stored in $s$ because $p_{r}$ is unseen during pretraining. Additionally, we observe that \salience{s}{r}{a} does not depend on the attention parameters as all memorization is implemented by \valuematrix{} (as demonstrated in Appendix \ref{app:memorizationproof}). In the next section, we demonstrate the role played by $\kq$ in modulating the contribution of this stored knowledge to the model's output.
\subsection{Attention Imbalance}
\label{subsec:attimbalance}
In the previous section, we defined a measure of how well knowledge is internally stored in a one-layer transformer. In this section, we study the role of the attention mechanism in \emph{controlling} how this stored knowledge contributes to the model's output. In particular, we show that \emph{imbalances} in the attention scores of $s$ and $p_r$ can suppress pretrained knowledge.

\begin{theorem}[Attention imbalance can lead to hidden knowledge] For pretraining data \dpre, where all $a$ appear at least once, suppose there exists a value matrix \valuematrix{} satisfying mild assumptions \ref{assump:nonuniformlabel} to \ref{assump:allfactmemorized}. Then the one-layer transformer \ntp{\valuematrix}{0}{s}{p_{r}} achieves $100\%$ accuracy under $\argmax$ decoding, but there exists $\kq$ s.t. \ntp{\valuematrix}{\kq}{s}{p_{r}} does not achieve 100\%.
\label{thm:hiddeninfosec4}
%\ar{What are these assumptions? You should change text to say something like ``Under mild regularity conditions'' or whatever best describes the assumptions in your case}
\end{theorem}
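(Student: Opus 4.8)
The plan is to analyze the two assertions separately, directly from the explicit form of the forward pass applied to a downstream query $(s,p_r)$. Here $X=[\embed{s}\ \ \embed{p_r}]$, the pre-softmax attention logits at the subject and prompt positions are $u_s=\transpose{\embed{s}}\kq\embed{p_r}$ and $u_p=\transpose{\embed{p_r}}\kq\embed{p_r}$, and writing the attention weights as $w_s$ and $w_p=1-w_s$, the self-attention output is $O:=w_s\valuematrix\embed{s}+w_p\valuematrix\embed{p_r}$. Since the embeddings are orthonormal and the output head is the identity, $\argmax$-decoding of $\ntp{\valuematrix}{\kq}{s}{p_r}$ returns $\argmax_{t\in\tokenset}\transpose{\embed{t}}O$. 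Thus $\kq$ enters only through the scalar $w_s\in(0,1)$, which interpolates the prediction between the stored direction $\valuematrix\embed{s}$ and the subject-independent direction $\valuematrix\embed{p_r}$.

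For the first assertion, take $\kq=0$; then $u_s=u_p=0$, so $w_s=w_p=\tfrac12$ and $O=\tfrac12(\valuematrix\embed{s}+\valuematrix\embed{p_r})$. Assumptions~\ref{assump:nonuniformlabel}--\ref{assump:allfactmemorized} (memorization of all facts, together with the conditions on the label distribution and on the unseen prompt embeddings) are exactly what guarantee that, for every fact $(s,r,a)$, the correct token still strictly maximizes $\transpose{\embed{t}}(\valuematrix\embed{s}+\valuematrix\embed{p_r})$ over $t$; hence $\ntp{\valuematrix}{0}{s}{p_r}$ decodes to $a$ on every fact, i.e.\ $100\%$ accuracy.

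For the second assertion it suffices to exhibit a single $\kq$ and a single fact on which decoding fails. Since the dataset is non-degenerate (cf.\ Assumption~\ref{assump:nonuniformlabel}), there is a relation $r_0$ carrying two facts $(s_1,r_0,a_1),(s_2,r_0,a_2)$ with $a_1\neq a_2$. Take $\kq=c\,\embed{p_{r_0}}\transpose{\embed{p_{r_0}}}$ for a large constant $c>0$ --- an extreme \emph{attention imbalance} toward the prompt token. By orthogonality, on any input $(s,p_{r_0})$ we get $u_s=c(\transpose{\embed{s}}\embed{p_{r_0}})^2=0$ and $u_p=c$, so $w_s=1/(1+e^{c})$ regardless of $s$, and $O(s)=w_s\valuematrix\embed{s}+(1-w_s)\valuematrix\embed{p_{r_0}}\to\valuematrix\embed{p_{r_0}}$ as $c\to\infty$, a vector that does not depend on $s$. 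A necessary condition for decoding $(s_i,p_{r_0})$ to $a_i$ for both $i$ is $\transpose{(\embed{a_1}-\embed{a_2})}O(s_1)>0$ and $\transpose{(\embed{a_2}-\embed{a_1})}O(s_2)>0$; as $c\to\infty$ these two quantities tend to $\pm\transpose{(\embed{a_1}-\embed{a_2})}\valuematrix\embed{p_{r_0}}$, so unless $\valuematrix\embed{p_{r_0}}$ assigns exactly the same logit to $a_1$ and $a_2$, one of them becomes negative for $c$ large, and $\ntp{\valuematrix}{\kq}{s_1}{p_{r_0}}$ or $\ntp{\valuematrix}{\kq}{s_2}{p_{r_0}}$ is wrong; hence accuracy is below $100\%$.

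The conceptual core, and the step I expect to be the real obstacle, is the second part: the point is that \emph{any} $\kq$ routing essentially all attention onto the shared, subject-agnostic prompt token forces a single prediction across all subjects of a relation, which is incompatible with that relation admitting two distinct answers. The one genuine technicality is the limiting-$\argmax$ step, which requires the mild non-degeneracy that $\valuematrix\embed{p_{r_0}}$ not tie $a_1$ and $a_2$ in logit value; I expect this to follow from Assumptions~\ref{assump:nonuniformlabel}--\ref{assump:allfactmemorized}, or else to be obtainable by choosing the pair $(r_0,a_1,a_2)$ suitably or by an arbitrarily small admissible perturbation of $\valuematrix$. The remaining ingredients --- evaluating the softmax at $\kq=0$, the orthogonality computation for the chosen $\kq$, and the limit $w_s\to0$ --- are routine.
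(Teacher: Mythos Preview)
Your argument is correct and takes a somewhat different route from the paper. First, a small sharpening: your hedging on the non-tie condition is unnecessary. Assumption~\ref{assump:nonuniformlabel} says precisely that for every relation $r$ the logits $\transpose{\embed{a}}\valuematrix\embed{p_{r}}$ are not constant over $a\in\mathcal{A}^{r}$, so you can simply pick $a_{1},a_{2}$ to be the argmax and argmin of this quantity over $\mathcal{A}^{r_{0}}$; Assumption~\ref{assump:ansdiversity} then furnishes the corresponding subjects $s_{1},s_{2}$. No perturbation of $\valuematrix$ is needed (and it would not be allowed, since the theorem fixes $\valuematrix$).

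On the comparison: the paper's proof is more explicitly constructive. It singles out the set $D_{\min}$ of facts whose answer is \emph{not} the relation-marginal argmax $\argmax_{a}\transpose{\embed{a}}\valuematrix\embed{p_{r}}$, and for each such fact derives the exact threshold on subject attention at which decoding flips, namely $\Att{s}\le d/\salience{s}{r}{a}$ with $d=\max_{a'}\transpose{\embed{a'}}\valuematrix\embed{p_{r}}-\transpose{\embed{a}}\valuematrix\embed{p_{r}}>0$. It then sets the relevant entries of $\kq$ to meet this threshold. Your approach instead takes a single rank-one $\kq$, sends the attention on the prompt token to $1$, and uses a clean pigeonhole argument (a subject-independent output cannot match two distinct targets). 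This is more elementary and arguably cleaner for the bare existence claim. What the paper's route buys is the quantitative link $\Att{s}\le d/\salience{s}{r}{a}$ between the required imbalance and the fact salience $\salience{s}{r}{a}$, which is exactly the ``robustness to attention imbalance'' point highlighted after the theorem and used conceptually throughout Section~\ref{sec:thy}; your limit argument does not expose that relationship.
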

We provide the specific construction in Appendix \ref{app:hiddeninformationproof} and discuss the relevant assumptions. To summarize, for each relation $r$, we can ensure that a subset of facts with that relation is predicted incorrectly by sending the attention weight on the subject token towards $0$ (equivalently, increasing the attention on the prompting token towards $1$). However, \emph{not all facts are equally susceptible to being suppressed in this way} as we highlight below:\\

\textbf{Fact Salience Controls Robustness to Attention Imbalance} Our proof of Theorem \ref{thm:hiddeninfosec4} relies on ensuring that the attention to the subject token when prompting with $(s,p_{r})$ is sufficiently low. In Appendix \ref{app:hiddeninformationproof}, we demonstrate that an incorrect prediction occurs when the attention to the subject token $\att{s} \leq \frac{d}{\salience{s}{r}{a}}$, for a constant $d$. This formalizes our intuition that fact salience determines how robustly a fact is stored: a small attention imbalance can only force an incorrect response on facts that are less salient.

Next, we make a connection to the phenomena of hidden knowledge, where a LLM outputs an incorrect response despite the correct response being extractable through other probing methods.

\textbf{Remark: Hidden Knowledge}
As Theorem \ref{thm:hiddeninfosec4} does not allow any modification of the value matrix, all factual associations are still stored in $\valuematrix$ and could potentially be extracted by examination of the model's internal parameters. As such, our theory agrees with empirical findings where factually correct knowledge can be extracted from model representations, despite an incorrect generation.

Ultimately, we observe that even when factual knowledge is stored in model parameters, it can be suppressed from the output by attention imbalance. In Section \ref{subsec:supress}, we study the fine-tuning process and demonstrate how attention imbalance can arise.

\subsection{Fine-tuning Attention Dynamics}
\label{subsec:supress}
In Section \ref{subsec:attimbalance}, we showed that imbalances in attention can harm factuality by suppressing stored knowledge. Here, we prove that the \emph{facts seen in finetuning} play an important role in controlling this imbalance. Concretely, finetuning on low-salience facts can exacerbate attention imbalance, while the inclusion of high-salience facts can counteract it. We begin by defining two quantities that appear in the $\kq$ gradient during finetuning (i.e. updating on $(s, p_{r}, a)$ triples).

\begin{definition}[Subject Token Relevance]
$\srel =  \transpose{(\embed{a}- \ntp{\valuematrix}{\kq}{s}{p_{r}})}(\valuematrix{} \embed{s})$
\label{s_token_rel}
\end{definition}
and correspondingly the relation token relevance:\\
\begin{definition}[Relation Token Relevance]
$\prel = \transpose{(\embed{a} - \ntp{\valuematrix}{\kq}{r}{p_{r}})} \valuematrix \embed{p_r}.$
\end{definition}

As derived in the appendix, the update to the attention matrix takes the form 
\begin{equation*}
 -\partialderiv{L}{\kq} \propto (\srel-\prel)(\embed{s}\transpose{\embed{p_{r}}} - \embed{p_{r}}\transpose{\embed{p_{r}}}).
\end{equation*}

The term $\embed{s}\transpose{\embed{p_{r}}}$ increases the attention on $s$, while the term $\embed{p_{r}}\transpose{\embed{p_{r}}}$ increases attention on $p_{r}$. Thus, the $\kq$ gradient up-weights attention on the most relevant token (as measured by $s_\text{rel}$ and $p_\text{rel}$).
\begin{theorem}[Factuality vs. Nonfactuality Inducing Gradients]
When finetuning on a fact $(s,p_{r})$, if $\srel - \prel <0$ then the attention update  $-\partialderiv{L}{\kq}$ decreases the attention on all $s'$ when prompting with $(s', p_{r})$, whereas when $\srel - \prel>0$, $-\partialderiv{L}{\kq}$ increases the attention on all $s'$ when prompting with $(s', p_r)$.
\label{thm:attndynamicssec4}
\end{theorem}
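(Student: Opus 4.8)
The plan is to treat the finetuning step as an explicit perturbation of $\kq$ and track its first-order effect on the single scalar that governs $\att{s'}$. Writing the update as $\kq \mapsto \kq + \eta\,(\srel-\prel)\,(\embed{s}\transpose{\embed{p_{r}}} - \embed{p_{r}}\transpose{\embed{p_{r}}})$ for some $\eta>0$ absorbing the learning rate and the positive scalar that comes out of the softmax Jacobian, the claim reduces to bookkeeping. Before anything else I would re-derive (or carefully re-read the appendix derivation of) the quoted form of $-\partialderiv{L}{\kq}$ and check that the scalar multiplying $(\srel-\prel)$ is indeed positive — e.g.\ that it is a product of attention probabilities — since the whole statement hinges on this sign not flipping.

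Next I would observe that when the model is prompted with $(s',p_{r})$ the self-attention softmax is over only two positions, so $\att{s'} = \sigma\!\big(\transpose{\embed{s'}}\kq\embed{p_{r}} - \transpose{\embed{p_{r}}}\kq\embed{p_{r}}\big)$ with $\sigma$ the strictly increasing logistic function. Hence it suffices to determine the sign of the change in the logit gap $g_{s'} := \transpose{\embed{s'}}\kq\embed{p_{r}} - \transpose{\embed{p_{r}}}\kq\embed{p_{r}}$ induced by the update. Plugging the update into $g_{s'}$ and using orthonormality of the token embeddings (with $p_r$ distinct from every subject token), the term $\embed{s}\transpose{\embed{p_{r}}}$ contributes to $\transpose{\embed{s'}}\kq\embed{p_{r}}$ only when $s'=s$, adding $\eta(\srel-\prel)$; the term $-\embed{p_{r}}\transpose{\embed{p_{r}}}$ lowers $\transpose{\embed{p_{r}}}\kq\embed{p_{r}}$ and hence adds $\eta(\srel-\prel)$ to $g_{s'}$ for every $s'$. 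Altogether $\Delta g_{s'} = \eta(\srel-\prel)\big(1 + \mathbbm{1}[s'=s]\big)$.

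Since $1+\mathbbm{1}[s'=s]\in\{1,2\}$ is strictly positive, $\operatorname{sign}(\Delta g_{s'}) = \operatorname{sign}(\srel-\prel)$ uniformly over $s'\in\sS$; as $\att{s'}=\sigma(g_{s'})$ with $\sigma$ increasing, $\att{s'}$ decreases for all $s'$ when $\srel-\prel<0$ and increases for all $s'$ when $\srel-\prel>0$. Because $g_{s'}$ is affine in $\eta$, this direction is exact, not merely infinitesimal, so nothing further is needed beyond the algebra above.

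The main obstacle I anticipate is not the computation but making the \emph{uniformity} across subjects transparent: the gradient is taken on the single finetuning fact $(s,p_{r})$, yet the conclusion concerns every subject. The conceptual point to land is that the $\embed{p_{r}}\transpose{\embed{p_{r}}}$ component of the update is subject-agnostic — it shifts the attention-to-$p_r$ logit identically for all prompts $(s',p_r)$ — while the $\embed{s}\transpose{\embed{p_{r}}}$ component only adds to subject $s$ and does so in the \emph{same} direction, so the two effects never conflict and the outer sign $(\srel-\prel)$ controls everything. I would also flag the one place a careless reader could go wrong: verifying that the scalar prefactor in $-\partialderiv{L}{\kq}$ is positive, since if it could vanish or change sign the theorem would have to be qualified.
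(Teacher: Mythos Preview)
Your proposal is correct and follows essentially the same approach as the paper: both arguments reduce to tracking the change in the two attention logits $\transpose{\embed{s'}}\kq\embed{p_r}$ and $\transpose{\embed{p_r}}\kq\embed{p_r}$ under the rank-two update and then invoke monotonicity of the two-point softmax. Your version is in fact slightly tidier, since by working with the logit gap $g_{s'}$ and writing $\Delta g_{s'}=\eta(\srel-\prel)(1+\mathbbm{1}[s'=s])$ you handle the $s'=s$ case explicitly, whereas the paper's proof only displays the shift in $\transpose{\embed{p_r}}\kq\embed{p_r}$ and leaves the (same-sign) shift in $\transpose{\embed{s}}\kq\embed{p_r}$ implicit.
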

We postpone the formal proof to Appendix \ref{app:att} but provide the following key observations.

\textbf{Role of Factual Salience:} Observe that the definition of subject token relevance (Def. \ref{s_token_rel}) includes the previously defined fact salience. Intuitively, gradient steps taken on less salient facts (relative to the token's correlation with the final output $\ntp{\valuematrix}{\kq}{s}{r}$) downweight attention on the $s$ token (where pretraining knowledge is stored) and push the transformer globally towards attention imbalance. 

\textbf{Global Effect of $p_{r}$ Attention Updates:} The term $\transpose{\embed{p_{r}}} \kq \embed{p_{r}}$ appears in the forward pass of \emph{all} facts with relation $r$. Therefore, updates on a fact where $\srel- \prel < 0$ implicitly decrease attention on all $s \in \subjsect$ (by increasing the attention score on $p_r$). When training on many such $(s, p_{r})$, these updates can accumulate and contribute to significant attention imbalance. Conversely, when $\srel-\prel>0$ the attention on $p_r$ will be decreased, implicitly up-weighting the subject attention for all $s \in \mathcal{S}$. Importantly, this is not a specific consequence of the $(s, p_{r})$ ordering examined in this work: it holds whenever the final prompt token is not subject-specific.

\subsection{Fact Popularity and Salience}
\label{subsec:freqissalience}
While our analysis so far has relied on how strongly facts are internally stored by the model ($\salience{s}{r}{a}$), it is unclear how to compute this quantity beyond the simplified one-layer transformer setting. Here, we verify that the number of times a fact is seen during pretraining correlates with its salience, as suggested by our results in Section \ref{sec:synth}. This suggests pretraining popularity as a proxy for salience. 

\begin{theorem}[Lower bound on fact salience]
Consider pretraining $\ntp{\valuematrix}{\kq}{s}{r}$ on a dataset $\dpre$ of size $N$ for one epoch with learning rate $\epsilon$. Suppose that the $\norm{\kq}_{\infty}<C_{KQ}$ and $\norm{W_{V}}_{\infty}<C_{V}$ throughout training. Suppose that the combination $(s,r)$ appears $n$ times and $s$ appears no more than $n\frac{\exp(-C_{KQ})(|\mathcal{T}|-1)}{2\exp(C_{V})}$ times. Then $\salience{s}{r}{a} \geq nc_{1}\epsilon$ where $c_{1} > 0$.
\label{thm:tokenlearning}

\end{theorem}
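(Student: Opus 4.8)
The plan is to track the single scalar $\salience{s}{r}{a}=\transpose{\embed{a}}\valuematrix\embed{s}$ through the $N$ stochastic gradient steps of the one pretraining epoch, showing that each step either leaves it fixed or moves it by a controlled amount, and then counting the net displacement. \emph{Single-step gradient.} Since $\valuematrix$ enters the forward pass only through the pre-softmax logits $z=\alpha_s\valuematrix\embed{s}+\alpha_r\valuematrix\embed{r}$, and the attention weights $(\alpha_s,\alpha_r)=\sigma(\transpose{X}\kq X_{-1})$ depend on $\kq$ alone, the cross-entropy gradient factorizes as $\nabla_{\valuematrix}L=(f-\embed{a})\transpose{(\alpha_s\embed{s}+\alpha_r\embed{r})}$, where $f=\ntp{\valuematrix}{\kq}{s}{r}$. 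Thus a step on a triple $(s',r',a')$ performs $\valuematrix\leftarrow\valuematrix+\epsilon(\embed{a'}-f')\transpose{(\alpha_{s'}\embed{s'}+\alpha_{r'}\embed{r'})}$ with $f'$ evaluated at the current parameters; the key structural fact I would highlight is that $\alpha_{s'}$ does not depend on $\valuematrix$, so the increment is linear in the residual $\embed{a'}-f'$.

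\emph{Effect on the salience.} Sandwiching the update between $\transpose{\embed{a}}$ and $\embed{s}$ and using orthogonality of the embeddings together with $\sS\cap\sR=\emptyset$ (a subject token is never a relation token), all terms vanish except one: the step on $(s',r',a')$ changes $\salience{s}{r}{a}$ by exactly $\epsilon\,\alpha_{s'}\bigl(\mathbbm{1}[a'=a]-f'_a\bigr)\mathbbm{1}[s'=s]$. Hence only steps whose subject equals $s$ move the salience; among those, a step whose answer is also $a$ moves it up by $\epsilon\alpha_{s'}(1-f'_a)>0$, and a step whose answer differs moves it down by $\epsilon\alpha_{s'}f'_a>0$.

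\emph{Summation and counting.} Initializing $\valuematrix$ so that $\salience{s}{r}{a}=0$ (e.g.\ $\valuematrix=0$) and summing over the epoch, there are \emph{at least} $n$ up-steps --- the $n$ occurrences of the pair $(s,r)$, each carrying answer $a$ by consistency of $\dpre$ --- and \emph{at most} $(\#\{t:s^{(t)}=s\}-n)$ down-steps. Now propagate the two boundedness hypotheses through the softmaxes: $\norm{\kq}_\infty<C_{KQ}$ forces $\alpha_{s'}\ge c\exp(-C_{KQ})$ for a universal $c>0$ (and $\alpha_{s'}\le1$), while $\norm{\valuematrix}_\infty<C_V$ forces every coordinate of $z$ into $(-C_V,C_V)$, hence $f'_a\le \exp(C_V)/(|\mathcal T|-1)$ and $1-f'_a$ is bounded below by a positive constant (here one uses that $|\mathcal T|$ is large relative to $\exp(C_V)$). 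Therefore the up-steps contribute at least $n\epsilon c_+$ for an explicit $c_+>0$, and the down-steps contribute at least $-\#\{t:s^{(t)}=s\}\cdot\epsilon\exp(C_V)/(|\mathcal T|-1)$. Plugging in the frequency hypothesis $\#\{t:s^{(t)}=s\}\le n\exp(-C_{KQ})(|\mathcal T|-1)/(2\exp(C_V))$, the down-steps subtract at most $\tfrac12 n\epsilon\exp(-C_{KQ})$, which the threshold is calibrated to keep below half the up-step mass, so $\salience{s}{r}{a}\ge nc_1\epsilon$ for some $c_1>0$.

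\emph{Main obstacle.} The gradient calculation is routine once one observes that attention weights are independent of $\valuematrix$; the delicate part is the last step --- carrying the two uniform norm bounds cleanly through the two softmaxes to produce honest constants, handling the drift of $\valuematrix$ and $\kq$ (and hence of $f'$ and $\alpha_{s'}$) across the epoch, for which the ``throughout training'' assumptions are exactly what is needed, and checking that the frequency threshold is tight enough that the $n$ guaranteed up-steps dominate the possibly far more numerous, but individually only $O(1/|\mathcal T|)$, down-steps.
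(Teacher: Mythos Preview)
Your approach is essentially the paper's: compute the $\valuematrix$-gradient, project onto the scalar $\transpose{\embed{a}}\valuematrix\embed{s}$ via orthogonality, push the norm hypotheses through the two softmaxes, and count positive versus negative increments over the epoch. The paper proves exactly this as a token-learning lemma and then specializes.

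One bookkeeping step does not close as you write it. You lower-bound each up-step by $\epsilon\,\alpha_{\min}(1-f'_a)$ with $\alpha_{\min}\ge c\exp(-C_{KQ})$ and then assert that the threshold keeps the down-step total, which you correctly compute as $\tfrac12 n\epsilon\exp(-C_{KQ})$, below half the up-step mass. But the up-step mass is at most $n\epsilon\cdot\tfrac12\exp(-C_{KQ})(1-f'_a)$, so the comparison goes the wrong way and your net lower bound is slightly negative. The issue is that lower-bounding $\alpha(1-f'_a)$ by $\alpha_{\min}(1-f'_a)$ applies the \emph{same} attention bound to both the $+\alpha$ and the $-\alpha f'_a$ pieces, which is suboptimal for the second. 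The paper instead splits $\epsilon\alpha(1-f'_a)=\epsilon\alpha-\epsilon\alpha f'_a$, uses $\alpha\ge\exp(-C_{KQ})/2$ on the first piece and $\alpha\le1$ on the second, and then merges the $-\epsilon\alpha f'_a$ terms across \emph{all} occurrences of $s$ (up-steps and down-steps alike). This gives the clean bound $n\epsilon\exp(-C_{KQ})/2 - n_{\text{tot}}\,\epsilon\exp(C_V)/(|\mathcal T|-1)$, which the frequency hypothesis makes positive. Your argument is one regrouping away from correct.
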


We postpone the proof to Appendix \ref{app:tokenlearning}.

Ultimately, our examination of the one-layer transformer provides a tight-fitting conceptual explanation of our simulated observations in Section \ref{sec:synth}. We quantify how strongly a fact is stored in the pretrained weights (fact salience) and demonstrate it grows with pretraining frequency (Theorem \ref{thm:tokenlearning}). Our analysis illustrates that fact salience plays a central role in determining (a) the suppression of pretrained knowledge (Theorem \ref{thm:attndynamicssec4}) and (b) how robustly a fact can be extracted at test time (Theorem \ref{thm:hiddeninfosec4}). We verify this intuition in large language models in Section \ref{sec:real}.

\section{Large Language Model Experiments}
\label{sec:real}
In this section, we verify our findings on the role of the QA dataset when finetuning pretrained large language models (Llama 7B and Mistral-7B). Unlike Section \ref{sec:synth}, where we prescribed an idealized model of the pretraining distribution, the settings examined here test our theory with models trained on large-scale pretraining corpora.

\subsection{Controlled Experiment}
\label{subsec:ctrl}
We first perform a controlled experiment to test the impact of fact salience without confounders.

\renewcommand{\tabcolsep}{2pt}
\begin{table}
\caption{Construction of PopQA-Controlled}
\label{popqa-con}
\centering
\begin{tabular}{l l l }
\toprule
{Question} &   {Ans}   &   {Pop.}  \\
\midrule
\textbf{FT-Top}                                                   &   &   \\
    In what country is Chrysler?  &   USA      &   55586       \\
    What sport does Matty Blair play?          &       Soccer                 &    50643\\
    What is Vanessa Angel's occupation?          &      Actor               &    157667 \\
    \midrule
\textbf{FT-Bottom}                                                 &   &           \\
     In what country is Robinson?  &   USA               &   142         \\
     What sport does Patrick Edema play?           &      Soccer                 &    46  \\
    What is Edwin Wallock's occupation?             &     Actor                &    68  \\
    \bottomrule
\end{tabular}
\end{table}

\textbf{Controlled Setup} 
  To isolate the effect of fact salience on downstream performance, we construct two fine-tuning datasets that differ in fact salience but have the same make-up of relation types and corresponding answers. We use a subset of the PopQA dataset \cite{mallen2023trust} consisting of the \textbf{country}, \textbf{sport} and \textbf{occupation} relations, which we refer to as PopQA-Controlled. We take all questions from each relation with the \emph{most frequent answer} (respectively USA, Soccer, and Actor) and divide them into more and less popular halves (respectively \texttt{FT-Top} and \texttt{FT-Bottom}). Examples from the two fine-tuning datasets are shown in Table \ref{popqa-con}. We disambiguate whether the fine-tuned models learn to use their pretrained knowledge or shortcuts in fine-tuning by testing on questions whose answers are not one of the three seen in fine-tuning. Our theory predicts that fine-tuning on more salient facts would encourage the model to use pretrained knowledge, resulting in better test performance.

\textbf{Results}
In Table \ref{popqa-corr-results}, we observe a significant decrease (7\%) in the factual accuracy of models finetuned on \texttt{FT-Bottom} versus \texttt{FT-Top}. Our results establish that both (a) the varying impact of finetuning on popular versus unpopular knowledge occurs in language models and (b) this effect cannot be explained by correlations between popularity and answer entities or relation types. 

\begin{figure*}[t!]
\vskip 0.2in
\centering
\subfigure[Average Subject Attention Score]{
\centering 
    \includegraphics[width =0.27\textwidth]{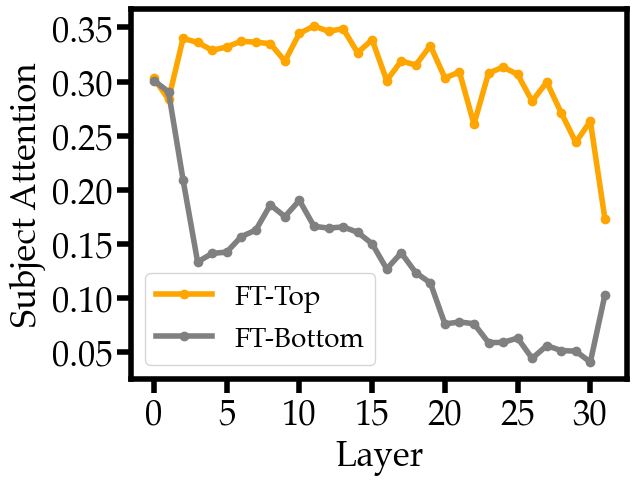}
}
\hfill
\subfigure[Attention Pattern on FT-Top versus FT-Bottom]{
    \centering
    \includegraphics[width = 0.6\textwidth]{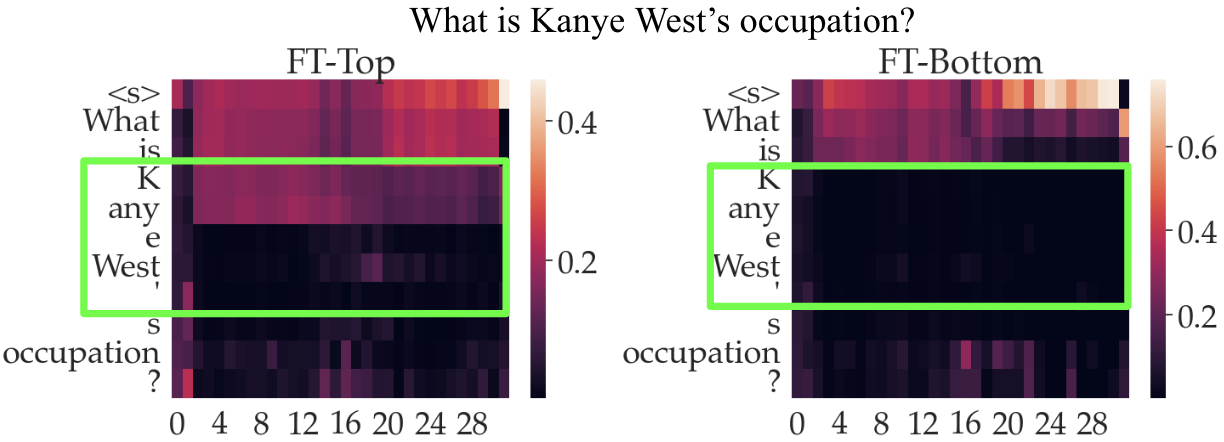}
}
\hfill

\caption{\textbf{Analysis of Llama-7B Attention Patterns} (a) We plot the maximum attention score over subject tokens for Llama-7B models finetuned on \texttt{FT-Top} and \texttt{FT-Bottom} across layers, where the maximum attention score is averaged over the heads in each layer. All results are averaged over examples in the PopQA-Controlled test set. (b) We compare the attention patterns for a specific question between the \texttt{FT-Top} and \texttt{FT-Bottom} fine-tuned models. The tokens corresponding to the subject are enclosed within the green rectangle.}
\label{fig:att}
\vskip -0.2in
\end{figure*}

\begin{table}
\caption{Results on PopQA-Controlled}
\label{popqa-corr-results}
\centering
\begin{tabular}{l l l}
    \toprule
    {Method} &   {Test-Acc} \\
    \midrule                                           
    Zero-Shot Prompting  &  20.1\%      \\
    FT-Top & 44.5 \% \\
    FT-Bottom & 37.4\% \\
    \midrule
\end{tabular}
\end{table}
 
\begin{figure}[t!]
\vskip 0.2in
\centering
\includegraphics[width = 0.3\textwidth]{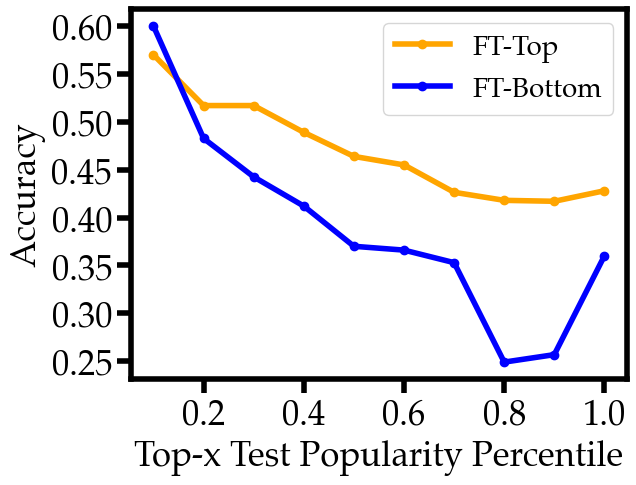}

\caption{\textbf{PopQA-Controlled Test Accuracy on Popularity Percentiles} We plot the accuracy on the top $x$ popularity percentiles of the PopQA-Controlled test set as a function of $x$. We compare the performance of finetuning on \texttt{FT-Top} versus \texttt{FT-Bottom}. We observe that while both finetuning datasets perform comparably on the most popular facts in the test set, training on the less popular data significantly underperforms on relatively less popular test questions.}
\label{fig:popqacorrstratified}
\vskip -0.2in
\end{figure}

\textbf{Stratified Analysis} In Figure \ref{fig:popqacorrstratified}, we additionally observe a surprising trend: the gap between \texttt{FT-Top} and \texttt{FT-Bottom} \emph{increases} as we consider increasingly less popular test set examples. While both finetuning datasets yield similar results in the most-popular 20\% of the test set, the gap between the methods widens as we include increasingly unpopular questions, dropping sharply around the 70th popularity percentile. This finding evinces that our observations \textit{are not} merely a result of matching the finetuning distribution to the test set in popularity (where we would expect large gains for \texttt{FT-Top} on more popular knowledge). Counter-intuitively, skewing the finetuning dataset to more popular examples appears to be especially beneficial in improving performance on less popular knowledge.

\textbf{Analysis of Attention Patterns} 
We study the attention patterns of models fine-tuned on \texttt{FT-Top} versus \texttt{FT-Bottom} and find that they match the theoretical predictions made in Section \ref{sec:thy}. In the left panel of Figure \ref{fig:att} we plot the average attention to the subject tokens (over the test set) as a function of Llama-7B layer index and find that \texttt{FT-Top} trained models attend significantly more to the subject than do models fine-tuned on \texttt{FT-Bottom}. On the right panel, we visualize the attention patterns of the \texttt{FT-Top} and \texttt{FT-Bottom} trained models and see that the attention to the subject-relevant tokens (highlighted in green) is suppressed after training on \texttt{FT-Bottom}. In this setting, these results provide evidence that the mechanistic prediction made in our one-layer transformer model in Section \ref{sec:thy} is reflective of what occurs in a real large language model. Further experimental results are presented in Appendix \ref{app:att}.

\subsection{Real QA Datasets}
Previously, we demonstrated the impact of the fine-tuning QA dataset on question-answering ability in a controlled setting. In this section, we test the implications of our findings for improving factual QA performance.

\subsubsection{Setup}
\textbf{Datasets} We specialize our evaluation to short answer and multiple choice QA involving facts of varying popularity (frequency in the pretraining data). \citet{mallen2023trust} introduce the PopQA dataset which is sampled to include questions about a range of popular and less-popular topics. We also examine a subset of the Entity Questions \cite{sciavolino2022simple} dataset, which includes a diverse range of popular and less popular facts. In both datasets, we utilize the Wikipedia page count of the question \emph{subject-entity} as a proxy for pretraining fact frequency \cite{mallen2023trust, razeghi2022impact}. This is necessary as it is challenging to directly measure fact popularity on large-scale pretraining corpora. Finally, we examine a subset of the MMLU dataset \cite{hendrycks2021measuring} consisting of history questions. Here, we use the pretrained model's loss as a proxy for fact popularity as introduced by \citet{kang2024unfamiliar}. 

\textbf{Models} Our experiments are performed on the Llama 7B \cite{touvron2023llama} and Mistral-7B \cite{jiang2023mistral} pretrained base language models. Restricting to base (non-chat-tuned) models allows us to directly study the effect of pretraining knowledge frequency without confounding introduced by prior finetuning stages. In all experiments, we use the best of LoRA \cite{hu2021lora} and full fine-tuning. In addition, we tune over standard regularization techniques including weight decay, early stopping, and learning rate individually for each model and fine-tuning dataset (further details in Appendix \ref{sec:hparamtunings}). 

\textbf{Evaluation} We evaluate the performance of models on short-answer QA by appropriately normalizing all model predictions and ground-truth answers and checking for exact string matches. We describe the specific normalization techniques in Appendix \ref{app:eval}. We note that both short answer datasets used in this work provide multiple synonymous ground-truth answers, mitigating the potential harshness of exact-match-based evaluations. For multiple-choice evaluation (MMLU), we evaluate the exact match of the model with the ground-truth answer choice.
\begin{table}
\caption{MMLU-History}
\label{table:mmluhist}
\centering
\begin{tabular}{l l l}
    \toprule
    {Finetuning Dataset} &   {Llama-7B} &{Mistral-7B} \\
    \midrule                                           
    FT-Top  &  \textbf{61.4}\% (0.3) &   \textbf{68.7}\% ($0.5$)  \\
    FT-Bottom & 55.6 \% ($0.4$) & 59.4\% ($0.5)$\\
    FT-Whole & 58.8\% $(0.2)$  & 67.4 \%($0.4)$ \\
    \midrule
\end{tabular}
\end{table}

\begin{figure*}[t!]
\vskip 0.2in
\centering
\includegraphics[width=1.0\textwidth]{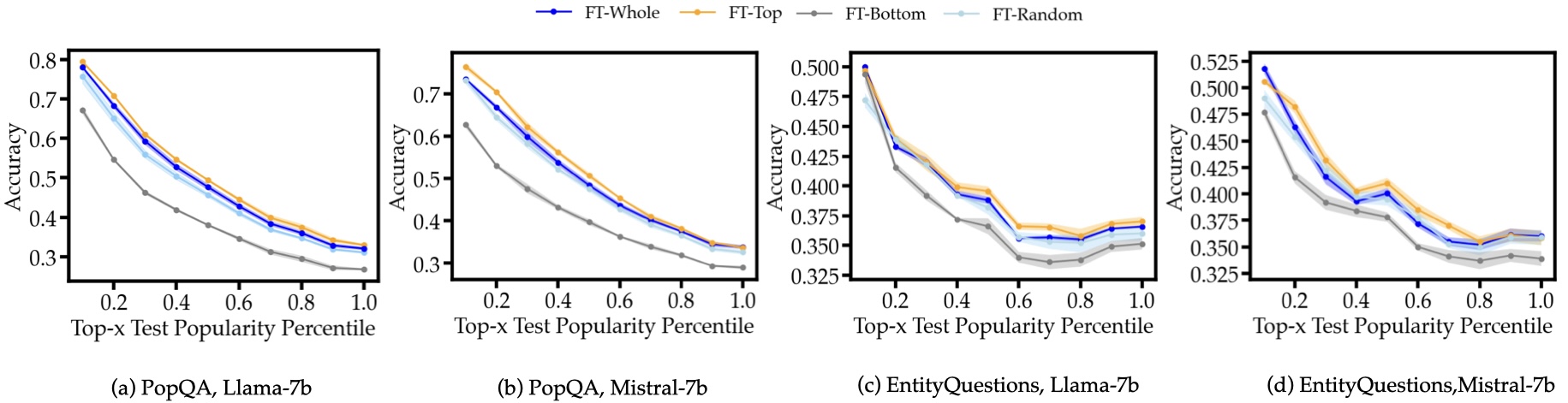}
\caption{\textbf{Finetuning Performance on Real Datasets} We plot the factual QA accuracy across two models and question-answering datasets under different fine-tuning strategies. \texttt{FT-Top} denotes finetuning on the most popular half of data, \texttt{FT-Whole} denotes finetuning on the whole training dataset, \texttt{FT-Random} denotes finetuning on a randomly selected half of the data, and \texttt{FT-Bottom} denotes finetuning on the lower 50\% of the data, sorted by popularity. We plot performance restricting to the top-$x$ popularity percentiles of the test set.}
\label{fig:finetuningreal}
\vskip -0.2in
\end{figure*}

\subsubsection{Results}

\textbf{Unpopular Facts Harm Downstream Factuality} In Figure \ref{fig:finetuningreal}, we observe that finetuning on the least popular knowledge consistently under-performs across both QA datasets (PopQA and Entity-Questions) and models (Llama-7B and Mistral). Similar results are also seen in Table \ref{table:mmluhist} on the MMLU dataset, where finetuning on less confident examples performs significantly (7-10\%) worse than both the top and whole datasets. The consistency of this observation across models and tasks supports that our observations are a general property of the finetuning process, rather than an artifact of a particular LLM or dataset.

\textbf{Impact Relative to Test Popularity} Figure \ref{fig:finetuningreal} displays similar trends relative to test-set popularity as those seen in the more restricted settings. In particular, we observe that the gap between fine-tuning on more versus less popular examples widens away from the most popular 10\% of the test points as we include more unpopular points in our test set. The advantage of training on more popular examples persists even when we include the least popular test points. This finding provides further evidence of our hypothesis that although some highly popular facts are relatively invariant to the choice of fine-tuning dataset, performance on relatively less popular facts varies more significantly.

\textbf{Popular Facts Mitigate Unpopular} Surprisingly, we find that even a \emph{randomly selected} 50\% subset (plotted sky-blue in Figure \ref{fig:finetuningreal}) significantly outperforms \texttt{FT-Bottom}, performing only slightly worse than \texttt{FT-Top} across all settings. This suggests that some popular points (which would be present in \texttt{FT-Random} but not \texttt{FT-Bottom}) can significantly mitigate the damage incurred by finetuning on less popular knowledge. Moreover, this conclusion is supported by our theoretical analysis: the gradients on more popular examples \emph{globally counteract} attention imbalance, as shown in Theorem \ref{thm:attndynamicssec4}.

\textbf{Finetuning Data Quantity in Question-Answering} In Figure \ref{fig:finetuningreal}, we compare the performance of the \emph{best top popularity subset} with fine-tuning on the entire training dataset. Across all settings, we observe that training on a smaller subset of the most popular facts performs comparably or better than using the entire dataset. Moreover, these variations are amplified on the same percentiles as the difference between \texttt{FT-Top} and \texttt{FT-Bottom} (i.e. between the 30th-60th popularity percentiles). In Table \ref{table:mmluhist}, we similarly observe that training only on the most familiar MMLU examples performs better than using the whole dataset across both models. This suggests that (a) \emph{only a subset} of the most popular training points are actually helpful in fine-tuning for factual question-answering and that (b) including the additional QA examples could be harmful to facts that are especially sensitive to finetuning distribution.
\section{Related Works}
    \textbf{Impact of Unfamiliar Knowledge} Recent works have examined the impact of unfamiliar examples during finetuning. \citet{kang2024unfamiliar} argues that unfamiliar examples in the fine-tuning dataset determine how a model responds to unfamiliar test examples. However, they do not consider the impact of unfamiliar fine-tuning examples on the general factuality of the model as is the focus of this work. Concurrently to this work, \citet{gekhman2024does} demonstrate empirically that finetuning on unfamiliar examples can worsen factuality, characterizing it as a result of overfitting. In this paper, however, we present a conceptual model of this phenomena, demonstrating that it arises from suppression of pretrained knowledge in favor of generic ``shortcuts". Our theory additionally explains the varying impact that different fine-tuning strategies have on \emph{test points} of varying popularity or familiarity.
    
    \textbf{Reliable Factuality of Language Models} Prior works have extensively studied challenges and approaches for improving the factuality of LLMs. \citet{mallen2023trust} and \citet{kandpal2023large} demonstrate that LLMs often underperform on obscure or long-tailed knowledge. \citet{li2023inferencetime} find that the factuality of language models can be improved by upweighting certain attention heads. Similarly, \citet{burns2022discovering} demonstrate that unsupervised internal probes can reveal factual associations in language models, even when they ultimately output an incorrect response. \citet{chuang2023dola} demonstrate that contrasting the final prediction from earlier layers of language models can improve factual accuracy. Prior works have also examined methods to improve factuality via abstention.\citet{varshney2023stitch} demonstrate that low confidence outputs can be hallucinations. Similarly, \citet{yuksekgonul2023attention} use token attention scores to detect when language models hallucinate. On the other hand, \citet{yang2023alignment,zhang2023rtuning, schulmantalk} introduce fine-tuning techniques to induce large-language models to refuse questions that are outside their knowledge boundary.  Collectively, these prior works demonstrate failure modes of factual reliability in language models, at times even when they can output the correct answer. In this work, on the other hand, we study the impact of the fine-tuning distribution on the model's downstream factuality.

\textbf{Understanding LLM Mechanisms and Training Dynamics} 
Many prior works have sought to explain the behaviors of language models and understand their failure modes. \citet{allenzhu2023physics} examine the conditions on pretraining data necessary for facts to be stored in an extractable form on a synthetic dataset. \citet{geva2023dissecting} identify the mechanisms by which facts are stored and extracted in language models. \citet{li2023transformers} study one-layer transformer pretraining dynamics on a topic modeling task. \citet{chen-sudden-2023} empirically studies the pretraining dynamics of syntax acquisition in masked language models.  \citet{tian2023scan} analyze the attention dynamics of one-layer transformers, demonstrating that uniquely co-occurring tokens are upweighted in attention. \citet{liu2023exposing} examine long-range reasoning failures of large language models, attributing them to erroneous attention scores. In this work, we focus on understanding the mechanics of fine-tuning relating to promoting or suppressing pretrained knowledge, thereby impacting the extractability of facts downstream.

\section{Discussion}

In this work, we investigate the impact of QA dataset composition on model factuality, making a notable finding: fine-tuning on questions about well-known facts uniformly improves factuality over fine-tuning on less known facts. We observe this trend across a range of simulation and real-world settings and develop a conceptual model of QA finetuning in a simplified one-layer transformer. Our results challenge intuitive heuristics for designing QA fine-tuning datasets. In particular, over-representing well-known facts in QA fine-tuning can actually be beneficial. Our results can inform principled methods to improve the downstream factuality of language models. Guided by our theory, a valuable area for future work can be developing regularization techniques to mitigate attention imbalance during finetuning. Another promising avenue is curriculum learning, which could enable more obscure facts to be trained on \emph{after} finetuning on more popular knowledge to mitigate attention imbalance. Finally, we hypothesize that our conceptual model can guide the development of synthetic data to efficiently improve knowledge extractability.

%Authors are kindly asked to make their submissions as accessible as possible for everyone including people with disabilities and sensory or neurological differences.
%Tips of how to achieve this and what to pay attention to will be provided on the conference websit \url{http://icml.cc/}.

%\section*{Software and Data}

%If a paper is accepted, we strongly encourage the publication of software and data with the
%camera-ready version of the paper whenever appropriate. This can be
%done by including a URL in the camera-ready copy. However, \textbf{do not}
%include URLs that reveal your institution or identity in your
%submission for review. Instead, provide an anonymous URL or upload
%the material as ``Supplementary Material'' into the OpenReview reviewing
%system. Note that reviewers are not required to look at this material
%when writing their review.

% Acknowledgements should only appear in the accepted version.
\section*{Acknowledgements}
This research was supported by the Center for AI
Safety Compute Cluster. Any opinions, findings,
conclusions, or recommendations expressed in
this material are those of the author(s) and do not
necessarily reflect the views of the sponsors.
This work was supported in part by the AI2050
program at Schmidt Sciences (Grant \#G2264481).
We gratefully acknowledge the support of Apple. TH acknowledges the support of Open Philanthropy.

%\textbf{Do not} include acknowledgements in the initial version of
%the paper submitted for blind review.

%f a paper is accepted, the final camera-ready version can (and
%probably should) include acknowledgements. In this case, please
%place such acknowledgements in an unnumbered section at the
%end of the paper. Typically, this will include thanks to reviewers
%who gave useful comments, to colleagues who contributed to the ideas,
%and to funding agencies and corporate sponsors that provided financial
%support.

\section*{Impact Statement}
This paper presents work whose goal is to advance the field of Machine Learning. There are many potential societal consequences of our work, none of which we feel must be specifically highlighted here.
% In the unusual situation where you want a paper to appear in the
% references without citing it in the main text, use \nocite

\bibliography{example_paper}

\begin{thebibliography}{35}
\providecommand{\natexlab}[1]{#1}
\providecommand{\url}[1]{\texttt{#1}}
\expandafter\ifx\csname urlstyle\endcsname\relax
  \providecommand{\doi}[1]{doi: #1}\else
  \providecommand{\doi}{doi: \begingroup \urlstyle{rm}\Url}\fi

\bibitem[Allen-Zhu \& Li(2023)Allen-Zhu and Li]{allenzhu2023physics}
Allen-Zhu, Z. and Li, Y.
\newblock Physics of language models: Part 3.1, knowledge storage and extraction, 2023.

\bibitem[Burns et~al.(2022)Burns, Ye, Klein, and Steinhardt]{burns2022discovering}
Burns, C., Ye, H., Klein, D., and Steinhardt, J.
\newblock Discovering latent knowledge in language models without supervision, 2022.

\bibitem[Chen et~al.(2024)Chen, Shwartz-Ziv, Cho, Leavitt, and Saphra]{chen-sudden-2023}
Chen, A., Shwartz-Ziv, R., Cho, K., Leavitt, M.~L., and Saphra, N.
\newblock Sudden drops in the loss: Syntax acquisition, phase transitions, and simplicity bias in {MLM}s.
\newblock In \emph{The Twelfth International Conference on Learning Representations}, 2024.
\newblock URL \url{https://openreview.net/forum?id=MO5PiKHELW}.

\bibitem[Chuang et~al.(2023)Chuang, Xie, Luo, Kim, Glass, and He]{chuang2023dola}
Chuang, Y.-S., Xie, Y., Luo, H., Kim, Y., Glass, J., and He, P.
\newblock Dola: Decoding by contrasting layers improves factuality in large language models, 2023.

\bibitem[Elsahar et~al.(2018)Elsahar, Vougiouklis, Remaci, Gravier, Hare, Laforest, and Simperl]{elsahar2018t}
Elsahar, H., Vougiouklis, P., Remaci, A., Gravier, C., Hare, J., Laforest, F., and Simperl, E.
\newblock T-rex: A large scale alignment of natural language with knowledge base triples.
\newblock In \emph{Proceedings of the Eleventh International Conference on Language Resources and Evaluation (LREC 2018)}, 2018.

\bibitem[Gekhman et~al.(2024)Gekhman, Yona, Aharoni, Eyal, Feder, Reichart, and Herzig]{gekhman2024does}
Gekhman, Z., Yona, G., Aharoni, R., Eyal, M., Feder, A., Reichart, R., and Herzig, J.
\newblock Does fine-tuning llms on new knowledge encourage hallucinations?, 2024.

\bibitem[Geva et~al.(2023)Geva, Bastings, Filippova, and Globerson]{geva2023dissecting}
Geva, M., Bastings, J., Filippova, K., and Globerson, A.
\newblock Dissecting recall of factual associations in auto-regressive language models, 2023.

\bibitem[Hendrycks et~al.(2021)Hendrycks, Burns, Basart, Zou, Mazeika, Song, and Steinhardt]{hendrycks2021measuring}
Hendrycks, D., Burns, C., Basart, S., Zou, A., Mazeika, M., Song, D., and Steinhardt, J.
\newblock Measuring massive multitask language understanding, 2021.

\bibitem[Hu et~al.(2021)Hu, Shen, Wallis, Allen-Zhu, Li, Wang, Wang, and Chen]{hu2021lora}
Hu, E.~J., Shen, Y., Wallis, P., Allen-Zhu, Z., Li, Y., Wang, S., Wang, L., and Chen, W.
\newblock Lora: Low-rank adaptation of large language models, 2021.

\bibitem[Huang et~al.(2023)Huang, Yu, Ma, Zhong, Feng, Wang, Chen, Peng, Feng, Qin, and Liu]{huang2023survey}
Huang, L., Yu, W., Ma, W., Zhong, W., Feng, Z., Wang, H., Chen, Q., Peng, W., Feng, X., Qin, B., and Liu, T.
\newblock A survey on hallucination in large language models: Principles, taxonomy, challenges, and open questions, 2023.

\bibitem[Jiang et~al.(2023)Jiang, Sablayrolles, Mensch, Bamford, Chaplot, de~las Casas, Bressand, Lengyel, Lample, Saulnier, Lavaud, Lachaux, Stock, Scao, Lavril, Wang, Lacroix, and Sayed]{jiang2023mistral}
Jiang, A.~Q., Sablayrolles, A., Mensch, A., Bamford, C., Chaplot, D.~S., de~las Casas, D., Bressand, F., Lengyel, G., Lample, G., Saulnier, L., Lavaud, L.~R., Lachaux, M.-A., Stock, P., Scao, T.~L., Lavril, T., Wang, T., Lacroix, T., and Sayed, W.~E.
\newblock Mistral 7b, 2023.

\bibitem[Jiang et~al.(2020)Jiang, Xu, Araki, and Neubig]{jiang2020know}
Jiang, Z., Xu, F.~F., Araki, J., and Neubig, G.
\newblock How can we know what language models know?, 2020.

\bibitem[Joshi et~al.(2023)Joshi, Rando, Saparov, Kim, and He]{joshi2023personas}
Joshi, N., Rando, J., Saparov, A., Kim, N., and He, H.
\newblock Personas as a way to model truthfulness in language models, 2023.

\bibitem[Kandpal et~al.(2023)Kandpal, Deng, Roberts, Wallace, and Raffel]{kandpal2023large}
Kandpal, N., Deng, H., Roberts, A., Wallace, E., and Raffel, C.
\newblock Large language models struggle to learn long-tail knowledge, 2023.

\bibitem[Kang et~al.(2024)Kang, Wallace, Tomlin, Kumar, and Levine]{kang2024unfamiliar}
Kang, K., Wallace, E., Tomlin, C., Kumar, A., and Levine, S.
\newblock Unfamiliar finetuning examples control how language models hallucinate, 2024.

\bibitem[Kazemi et~al.(2023)Kazemi, Mittal, and Ramachandran]{kazemi2023understanding}
Kazemi, M., Mittal, S., and Ramachandran, D.
\newblock Understanding finetuning for factual knowledge extraction from language models, 2023.

\bibitem[Li et~al.(2023{\natexlab{a}})Li, Patel, Viégas, Pfister, and Wattenberg]{li2023inferencetime}
Li, K., Patel, O., Viégas, F., Pfister, H., and Wattenberg, M.
\newblock Inference-time intervention: Eliciting truthful answers from a language model, 2023{\natexlab{a}}.

\bibitem[Li et~al.(2023{\natexlab{b}})Li, Li, and Risteski]{li2023transformers}
Li, Y., Li, Y., and Risteski, A.
\newblock How do transformers learn topic structure: Towards a mechanistic understanding, 2023{\natexlab{b}}.

\bibitem[Liu et~al.(2023{\natexlab{a}})Liu, Ash, Goel, Krishnamurthy, and Zhang]{liu2023exposing}
Liu, B., Ash, J.~T., Goel, S., Krishnamurthy, A., and Zhang, C.
\newblock Exposing attention glitches with flip-flop language modeling, 2023{\natexlab{a}}.

\bibitem[Liu et~al.(2023{\natexlab{b}})Liu, Casper, Hadfield-Menell, and Andreas]{liu2023cognitive}
Liu, K., Casper, S., Hadfield-Menell, D., and Andreas, J.
\newblock Cognitive dissonance: Why do language model outputs disagree with internal representations of truthfulness?, 2023{\natexlab{b}}.

\bibitem[Mallen et~al.(2023)Mallen, Asai, Zhong, Das, Khashabi, and Hajishirzi]{mallen2023trust}
Mallen, A., Asai, A., Zhong, V., Das, R., Khashabi, D., and Hajishirzi, H.
\newblock When not to trust language models: Investigating effectiveness of parametric and non-parametric memories, 2023.

\bibitem[Ouyang et~al.(2022)Ouyang, Wu, Jiang, Almeida, Wainwright, Mishkin, Zhang, Agarwal, Slama, Ray, Schulman, Hilton, Kelton, Miller, Simens, Askell, Welinder, Christiano, Leike, and Lowe]{ouyang2022training}
Ouyang, L., Wu, J., Jiang, X., Almeida, D., Wainwright, C.~L., Mishkin, P., Zhang, C., Agarwal, S., Slama, K., Ray, A., Schulman, J., Hilton, J., Kelton, F., Miller, L., Simens, M., Askell, A., Welinder, P., Christiano, P., Leike, J., and Lowe, R.
\newblock Training language models to follow instructions with human feedback, 2022.

\bibitem[Petroni et~al.(2019)Petroni, Rockt{\"a}schel, Lewis, Bakhtin, Wu, Miller, and Riedel]{petroni2019language}
Petroni, F., Rockt{\"a}schel, T., Lewis, P., Bakhtin, A., Wu, Y., Miller, A.~H., and Riedel, S.
\newblock Language models as knowledge bases?
\newblock \emph{arXiv preprint arXiv:1909.01066}, 2019.

\bibitem[Radford et~al.(2019)Radford, Wu, Child, Luan, Amodei, Sutskever, et~al.]{radford2019language}
Radford, A., Wu, J., Child, R., Luan, D., Amodei, D., Sutskever, I., et~al.
\newblock Language models are unsupervised multitask learners.
\newblock \emph{OpenAI blog}, 1\penalty0 (8):\penalty0 9, 2019.

\bibitem[Razeghi et~al.(2022)Razeghi, au2, Gardner, and Singh]{razeghi2022impact}
Razeghi, Y., au2, R. L. L.~I., Gardner, M., and Singh, S.
\newblock Impact of pretraining term frequencies on few-shot reasoning, 2022.

\bibitem[Roberts et~al.(2020)Roberts, Raffel, and Shazeer]{roberts-etal-2020-much}
Roberts, A., Raffel, C., and Shazeer, N.
\newblock How much knowledge can you pack into the parameters of a language model?
\newblock In Webber, B., Cohn, T., He, Y., and Liu, Y. (eds.), \emph{Proceedings of the 2020 Conference on Empirical Methods in Natural Language Processing (EMNLP)}, pp.\  5418--5426, Online, November 2020. Association for Computational Linguistics.
\newblock \doi{10.18653/v1/2020.emnlp-main.437}.
\newblock URL \url{https://aclanthology.org/2020.emnlp-main.437}.

\bibitem[Schulman(2023)]{schulmantalk}
Schulman, J.
\newblock Reinforcement learning from human feedback: Progress and challenges.
\newblock Talk given at the University of California, Berkeley on April 19, 2023., 2023.
\newblock URL \url{https://www.youtube.com/watch?v=hhiLw5Q_UFg}.

\bibitem[Sciavolino et~al.(2022)Sciavolino, Zhong, Lee, and Chen]{sciavolino2022simple}
Sciavolino, C., Zhong, Z., Lee, J., and Chen, D.
\newblock Simple entity-centric questions challenge dense retrievers, 2022.

\bibitem[Tian et~al.(2023{\natexlab{a}})Tian, Mitchell, Yao, Manning, and Finn]{tian2023finetuning}
Tian, K., Mitchell, E., Yao, H., Manning, C.~D., and Finn, C.
\newblock Fine-tuning language models for factuality, 2023{\natexlab{a}}.

\bibitem[Tian et~al.(2023{\natexlab{b}})Tian, Wang, Chen, and Du]{tian2023scan}
Tian, Y., Wang, Y., Chen, B., and Du, S.
\newblock Scan and snap: Understanding training dynamics and token composition in 1-layer transformer, 2023{\natexlab{b}}.

\bibitem[Touvron et~al.(2023)Touvron, Lavril, Izacard, Martinet, Lachaux, Lacroix, Rozière, Goyal, Hambro, Azhar, Rodriguez, Joulin, Grave, and Lample]{touvron2023llama}
Touvron, H., Lavril, T., Izacard, G., Martinet, X., Lachaux, M.-A., Lacroix, T., Rozière, B., Goyal, N., Hambro, E., Azhar, F., Rodriguez, A., Joulin, A., Grave, E., and Lample, G.
\newblock Llama: Open and efficient foundation language models, 2023.

\bibitem[Varshney et~al.(2023)Varshney, Yao, Zhang, Chen, and Yu]{varshney2023stitch}
Varshney, N., Yao, W., Zhang, H., Chen, J., and Yu, D.
\newblock A stitch in time saves nine: Detecting and mitigating hallucinations of llms by validating low-confidence generation, 2023.

\bibitem[Yang et~al.(2023)Yang, Chern, Qiu, Neubig, and Liu]{yang2023alignment}
Yang, Y., Chern, E., Qiu, X., Neubig, G., and Liu, P.
\newblock Alignment for honesty, 2023.

\bibitem[Yuksekgonul et~al.(2023)Yuksekgonul, Chandrasekaran, Jones, Gunasekar, Naik, Palangi, Kamar, and Nushi]{yuksekgonul2023attention}
Yuksekgonul, M., Chandrasekaran, V., Jones, E., Gunasekar, S., Naik, R., Palangi, H., Kamar, E., and Nushi, B.
\newblock Attention satisfies: A constraint-satisfaction lens on factual errors of language models, 2023.

\bibitem[Zhang et~al.(2023)Zhang, Diao, Lin, Fung, Lian, Wang, Chen, Ji, and Zhang]{zhang2023rtuning}
Zhang, H., Diao, S., Lin, Y., Fung, Y.~R., Lian, Q., Wang, X., Chen, Y., Ji, H., and Zhang, T.
\newblock R-tuning: Teaching large language models to refuse unknown questions, 2023.

\end{thebibliography}
\bibliographystyle{icml2024}

%%%%%%%%%%%%%%%%%%%%%%%%%%%%%%%%%%%%%%%%%%%%%%%%%%%%%%%%%%%%%%%%%%%%%%%%%%%%%%%
%%%%%%%%%%%%%%%%%%%%%%%%%%%%%%%%%%%%%%%%%%%%%%%%%%%%%%%%%%%%%%%%%%%%%%%%%%%%%%%
% APPENDIX
%%%%%%%%%%%%%%%%%%%%%%%%%%%%%%%%%%%%%%%%%%%%%%%%%%%%%%%%%%%%%%%%%%%%%%%%%%%%%%%
%%%%%%%%%%%%%%%%%%%%%%%%%%%%%%%%%%%%%%%%%%%%%%%%%%%%%%%%%%%%%%%%%%%%%%%%%%%%%%%
\newpage
\appendix
\onecolumn
\section{Theory Appendix}
\subsection{Notations and Setup}

\label{app:notations}

\paragraph{Representation of Tokens} We consider a synthetic language with a token set $\mathcal{T}$ where $|\mathcal{T}|$. When representing tokens, we consider them to be integers in the interval $[0, |\mathcal{T}|]$. To represent factual associations, we further partition $\mathcal{T} = \mathcal{S} \cup \mathcal{R} \cup \mathcal{A} \cup \{p_{r} |r \in \mathcal{R}\}$. As such $\mathcal{S}$, $\mathcal{R}$, $\mathcal{A}$, and $\{p_{r} |r \in \mathcal{R}\}$ are sets of integers, representing the underlying tokens. 

\paragraph{Embedding Layer} As introduced in Section \ref{sec:thy}, we consider fully fixed, fully orthogonal token embeddings (i.e. the embedding matrix is the identity matrix) and the embedding of a token $i$ is $e_{i} \in \mathbbm{R}^{|\mathcal{T}|}$ (i.e. a vector with all entries $0$ except for the $i$-th component). Moreover, the embedding and unembedding modules are considered to be weight-tied as examined in \citet{li2023transformers}. In this setting, we have that the embedding of a token $i$ is the $i$-th basis vector (i.e. $e_{i}$), and as a result the embeddings of the different tokens are orthogonal to one another. In addition, the $i$-th component of the model output $f(s,r,\valuematrix{}, \kq{})_{i}$ is the probability of token $i$ being the next token (as we discuss further when introducing $\argmax$ decoding).

\paragraph{One-Layer Transformer Architecture}
We consider a one-headed, one-layer transformer in this work with fully orthogonal and weight-tied embedding and unembedding layers. We assume that the key, query, and value matrices are square, thereby preserving the dimensions of the embedding. We additionally assume that the language modeling head corresponds to an identity transformation (this is possible due to the projections preserving the dimensions of the embedding).

Denote a matrix of embedded inputs $X \in \mathbb{R}^{|\mathcal{T}| \times l}$, where $l$ is the sequence length. We can then write the output of this single head of attention (given parameters $W_{K}, W_{Q}$, $W_{V}$) as :
\begin{equation*}
\selfattext{W_Q}{W_K}{\valuematrix}{X} = (\valuematrix X) \softmax{(\keymatrix X)^{T} (\querymatrix X)}
\end{equation*} 
where $\sigma$ denotes the column-wise softmax operation. 

In our simplified model, we consider only one self-attention layer and consider that the language modeling head is the identity, which is possible because the embeddings, query, key, and value projections all lie in  $\mathbb{R}^{|\tokenset|}$. We can then write the \emph{next-token prediction} function, given a sequence of tokens $t_{1},...,t_{l}$, as $f: \mathcal{T}^{l} \rightarrow \Delta(\mathcal{T})$ where $l$ is the sequence length and $\Delta(\mathcal{T})$ is the space of probability distributions over the token space $\mathcal{T}$. Applying our simplifying assumptions, we can write that 
\begin{equation*}
\fextended{[t_{1},,,.t_{l}]} = \softmax{\selfattext{\querymatrix}{\keymatrix}{\valuematrix}{X}_{:-1}}
\end{equation*}
where the subscript $:-1$ denotes the last column of the matrix. Thus, we take the softmax of the (post-self-attention) embedding of the last input token to predict the next token. Note that this can be rewritten:

\begin{equation*}
     \fextended{[t_{1},,,.t_{l}]}  = \softmax{(W^{V} X) \sigma((W^{K} X)^{T} W^{Q} X_{:-1})}.
\end{equation*}.

Note that the actual computation depends only on the product $\transpose{(\keymatrix)} \querymatrix$ and thus, we will reparameterize as $\kq = \transpose{(\keymatrix)} \querymatrix$. For convenience, in the main text, we redefine the $\text{Self-Att}$ function to map from an input sequence embedding matrix to the last token's embedding (i.e. $\text{Self-Att}: \mathbb{R}^{|\mathcal{T}| \times l} \rightarrow \mathbb{R}^{|\mathcal{T}|}$) and we parametrize with only $W^V$ and $W^{KQ}$. Thus, we use the definition

\begin{equation*}
    \selfatt{\valuematrix}{\kq}{X} = (W^{V} X)\sigma(X^{T} W^{KQ} X_{:-1}).
\end{equation*}

For most of our analysis, we will focus on the specialized setting of next-token prediction given the context $(s,r)$ or $(s, p_{r})$. In this specialized setting (considering $(s,r)$ for instance), we have that $X = \begin{bmatrix} \embed{s} & \embed{r} \end{bmatrix}$. We can write the following
\begin{equation*}
\ntp{\valuematrix}{\kq}{s}{r} = \softmax{W^{V} \begin{bmatrix} \embed{s} & \embed{r} \end{bmatrix} \sigma(\begin{bmatrix} \transpose{\embed{s}} \\ \transpose{\embed{r}} \end{bmatrix} \kq \embed{r}}.
\end{equation*}

Finally, we will rewrite this in order to more clearly demonstrate the contribution of different tokens to the final prediction. 

\begin{equation*}
\softmax{\begin{bmatrix} \transpose{\embed{s}} \\ \transpose{\embed{r}}\end{bmatrix} \kq \embed{r}}_{0} \valuematrix \embed{s}  + \softmax{\begin{bmatrix} \transpose{\embed{s}} \\ \transpose{\embed{r}}\end{bmatrix} \kq \embed{r}}_{1} W^{V} \embed{r}.
\end{equation*}

In the following proofs, we will often abbreviate $\Att{s} = \softmax{\begin{bmatrix} \transpose{\embed{s}} \\ \transpose{\embed{r}}\end{bmatrix} \kq \embed{r}}_{0}$ and $\Att{r} = \softmax{\begin{bmatrix} \transpose{\embed{s}} \\ \transpose{\embed{r}}\end{bmatrix} \kq \embed{r}}_{1}$. Thus, the pre-softmax output of the one-layer transformer can be written as 
\begin{equation*}
  \Att{s} \valuematrix \embed{s}  + \Att{r} W^{V} \embed{r}.  
\end{equation*}
%\subsection{Data Distributions}

\label{sec:datagen}

\paragraph{$\argmax$ Decoding} Recall that the language model embedding layer is is fixed to be an identity matrix. In addition, the output projection is defined to be an identity transformation. Thus, the final token output of the self-attention layer can be interpreted as an un-normalized probability distribution over the token space $\mathcal{T}$. We then define $\argmax$-decoding as: 
\begin{equation*}
\argmax_{t \in \mathcal{T}} (\ntp{\valuematrix}{\kq}{s}{r})_{t}.
\end{equation*}
That is, the index of the maximum value in the one-layer transformer output corresponds to the index of the next token. Additionally, due to the softmax layer preserving the ordering of the vector's components, this is equivalent to:
\begin{equation*}
\argmax_{t \in \mathcal{T}} (\selfatt{\valuematrix}{\kq}{X})_{t}.
\end{equation*}

\subsection{One-Layer Transformer Can Memorize All Facts}
\label{app:memorizationproof}

In this section, we prove that despite its simplified nature, a one-layer transformer is capable of memorizing all facts in the pretraining dataset. This helps establish that our simplified model does not restrict the ability to learn facts in our setting.

\begin{theorem}[One-layer transformer can fully memorize the pretraining dataset] Consider any pretraining dataset $\dpre = \{(s^{(1)}, r^{(1)}, a^{(1)}),..., (s^{(N)}, r^{(N)}, a^{(N)}\}$ such that any $(s,r)$ combination appears exactly once. There exists a one-layer transformer with parameters $\ntp{\valuematrix}{\kq}{s}{r}$ s.t. $\argmax \ntp{\valuematrix}{\kq}{s}{r} = a$ $\forall (s,r,a) \in \dpre$.
\label{thm:onelayermem}
\end{theorem}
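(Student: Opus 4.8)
The plan is to exhibit $W_V$ and $W_{KQ}$ explicitly. First I would unfold the forward pass on a pretraining input $(s,r)$: since $\embed{t}=e_t$ is the $t$-th standard basis vector, with $X = [\,\embed{s}\ \ \embed{r}\,]$ and last column $\embed{r}$ we get $X^\top W_{KQ}\embed{r} = [\,(W_{KQ})_{s,r},\ (W_{KQ})_{r,r}\,]^\top$, so the pre-softmax output is $\Att{s}\,(W_V\embed{s}) + \Att{r}\,(W_V\embed{r})$ with $\Att{s} = \softmax{[\,(W_{KQ})_{s,r},\,(W_{KQ})_{r,r}\,]}_{0}$. Setting every diagonal entry $(W_{KQ})_{r,r}$ to $0$ makes $\Att{s} = \sigma\!\big((W_{KQ})_{s,r}\big)$, and since the entries $(W_{KQ})_{s,r}$ with $s\in\mathcal S$, $r\in\mathcal R$ form a free block of the matrix, $\Att{s}$ can be set to any prescribed value in $(0,1)$ independently for each pair. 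So if I also set every subject column of $W_V$ equal to a single vector $V$ and every relation column equal to a single vector $W$, the output on $(s,r)$ is exactly the point $(1-\tau)V + \tau W$ of the segment $[V,W]$, where $\tau=\Att{r}$ is a per-fact dial ranging over $(0,1)$, and $\argmax$-decoding returns the index of the largest coordinate of that point.

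The problem thus reduces to choosing $V,W$ so that, as $\tau$ sweeps $(0,1)$, $\argmax\big((1-\tau)V+\tau W\big)$ takes every answer token as its value. I would do this with a tangent-lines-to-a-parabola construction. Enumerate the answers $a^{(1)},\dots,a^{(k)}$ where $k=|\mathcal A|$, set the $a^{(l)}$-coordinate of $V$ to $k^2-l^2+l$ and of $W$ to $k^2-l^2+(2k+1)l$, and set all non-answer coordinates of $V$ and $W$ to $0$. Writing $t = 1+2k\tau \in [1,2k+1]$, the $a^{(l)}$-coordinate of $(1-\tau)V+\tau W$ equals $k^2 + (lt-l^2)$, i.e.\ $k^2$ plus the line tangent to the parabola $y=t^2/4$ at $t=2l$; the upper envelope of these $k$ tangent lines is attained by line $l$ precisely on $t\in(2l-1,2l+1)$, i.e.\ on $\tau\in(\tfrac{l-1}{k},\tfrac{l}{k})$, while the additive $k^2$ keeps every answer coordinate positive, hence above the identically-zero non-answer coordinates, along the whole segment. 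Therefore $\argmax\big((1-\tau)V+\tau W\big)=a^{(l)}$ whenever $\tau$ lies in the $l$-th subinterval. Finally, for each fact $(s,r,a^{(l)})\in\dpre$ I would choose $(W_{KQ})_{s,r}$ so that $\Att{r}=1-\sigma\!\big((W_{KQ})_{s,r}\big)$ equals the midpoint $\tfrac{2l-1}{2k}$ of that subinterval, which is solvable because the midpoint lies strictly inside $(0,1)$, and since each $(s,r)$ occurs exactly once there is no conflict. All remaining entries of $W_{KQ}$ and the answer/prompt columns of $W_V$ are never read on inputs of the form $(s,r)$ and may be set arbitrarily; in particular one can perturb $W_V$ to be full rank without disturbing any of the strict inequalities above.

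The step I expect to be the crux is finding the right $V,W$. Naive attempts fail: forcing $\Att{s}$ near $1$ (attend only to the subject) cannot work, because one subject participates in several facts whose answers differ; and the symmetric idea of placing each endpoint of every segment inside the target $\argmax$ region of one of its incident facts fails — e.g.\ for $\mathcal S=\mathcal R=\mathbb{Z}_k$ with $a(i,j)=(i+j)\bmod k$ every vertex is incident to facts of all $k$ answers, so committing each vertex to a single region covers at most $2k$ of the $k^2$ facts. The right move is exactly the observation above: push all pair-specific information into the scalar $\tau$ and arrange $[V,W]$ to pass, in order, through every $\argmax$ cone; the tangent-lines choice is the cleanest way to guarantee this, after which every remaining step is a routine verification.
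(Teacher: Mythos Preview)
Your proof is correct and takes a genuinely different route from the paper. The paper sets $W_{KQ}=0$ so attention is uniform, and puts all the work into $W_V=\sum_{(s,r,a)\in\dpre} \embed{a}\transpose{\embed{s}}+\embed{a}\transpose{\embed{r}}$: the pre-softmax output at $(s,r)$ then has value $1$ at each answer in $\mathcal A^{s}\cap\mathcal A^{r}$ and $\tfrac12$ at each answer in the symmetric difference, and the paper concludes by arguing that the intersection is the singleton $\{a^{*}\}$. You do the opposite: collapse $W_V$ to just two column-types ($V$ for all subjects, $W$ for all relations), push every bit of per-fact information into the single free scalar $(W_{KQ})_{s,r}$, and use the tangent-lines construction to guarantee the segment $[V,W]$ passes through every answer's $\argmax$ cone in order.

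What each buys: the paper's construction mirrors the fact-storage model used throughout the rest of the paper --- salience $\transpose{\embed{a}}W_V\embed{s}$ lives in $W_V$ --- so it connects directly to the subsequent theory on attention imbalance. Your construction is cleaner as a pure existence argument and is in fact more robust: the paper's step ``each $(s,r)$ appears once $\Rightarrow\mathcal A^{s}\cap\mathcal A^{r}=\{a^{*}\}$'' is not valid in general (e.g.\ $\{(s_1,r_1,a_1),(s_1,r_2,a_2),(s_2,r_1,a_2)\}$ gives $\mathcal A^{s_1}\cap\mathcal A^{r_1}=\{a_1,a_2\}$), whereas your argument places no combinatorial constraints on $\dpre$ beyond the stated uniqueness of $(s,r)$ pairs.
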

\begin{proof}
We will prove that the choice of parameters: $\valuematrix = \sum_{(s,r,a) \in D_\text{pre}} \embed{a} \transpose{\embed{s}} + \embed{a} \transpose{\embed{r}}$ and $\kq = 0$ satisfies the result of the theorem. 

Observe that $\ntp{\valuematrix}{0}{s}{r}  = \softmax{\frac{1}{2} \valuematrix \embed{s^{*}} + \frac{1}{2} \valuematrix \embed{r^{*}}}$. Next, we will define two sets \ansset{s^{*}} and \ansset{r^{*}}
\begin{equation*}
\begin{split}
\ansset{s^{*}} = \{a | (s^{*}, r, a) \in \dpre \, \forall r \in \mathcal{R}\} \\
\ansset{r^{*}} = \{a | (s, r^{*}, a) \in \dpre \, \forall s \in \mathcal{S}\}
\end{split}
\end{equation*}

Intuitively, $\ansset{s^{*}}$ denotes the set of all answers that are observed associated with $s^{*}$ (as $r$ is allowed to vary), and likewise $\ansset{s^{*}}$ is the set of answers seen with $r^{*}$ as $s$ is allowed to vary. Using these sets and applying the orthogonality of the embeddings, we can simplify the expression for $f$ to  
\begin{equation*} 
\ntp{\valuematrix}{\kq}{s}{r} = \sigma(\frac{1}{2} \sum_{a \in A^{s^{*}}} \mathbbm{1}\{a\}  + \frac{1}{2} \sum_{a\in\mathcal{A}^{r^{*}}} \mathbbm{1}\{a\}).
\end{equation*}
We can further simplify by pulling out the term corresponding to $a \in \mathcal{A}^{s^{*}} \cap \mathcal{A}^{r{*}}$. We will abbreviate $\mathcal{A}^\cap = \mathcal{A}^{s^{*}} \cap \mathcal{A}^{r{*}}$,  $\mathcal{A}^{r^*}_{\setminus} = \mathcal{A}^{r^*} \setminus \mathcal{A}^\cap$, and $\mathcal{A}^{s^*}_\setminus = \mathcal{A}^{s^*} \setminus \mathcal{A}^\cap$
\begin{equation*}
\ntp{\valuematrix}{\kq
}{s}{r} = \sigma(\sum_{a \in \mathcal{A}^{\cap}}\embed{a} + \frac{1}{2}(\sum_{a \in \mathcal{A}^{s^{*}}_\setminus} \embed{a} + \sum_{a \in \mathcal{A}^{r^{*}}_\setminus} \embed{a})).
\end{equation*}

For the remainder of the proof, we will examine the \emph{pre-softmax} output of the one-layer transformer, which we will write as
\begin{equation*}
Z = \sum_{a \in \mathcal{A}^{\cap}}\embed{a} + \frac{1}{2}(\sum_{a \in \mathcal{A}^{s^{*}}_\setminus} \embed{a} + \sum_{a \in \mathcal{A}^{r^{*}}_\setminus} \embed{a}).
\end{equation*}

Observe that each $(s,r)$ tuple is associated with only one $a$ in the pretraining dataset $D_{pre}$, implying that $\mathcal{A}^{r} \cap \mathcal{A}^{s} = \{a^{*}\}$ where we define  $a^{*}$ such that $(s^{*}, r^{*}, a^{*}) \in D_\text{pre}$. This implies that (using orthogonality) that:
\begin{equation*}
\transpose{\embed{a^*}} Z = 1.
\end{equation*}
On the other hand, we have
\begin{equation*}
\transpose{\embed{a}} Z = \frac{1}{2} \,\,\,\, \forall a \in \mathcal{A}^{r^*}_{\setminus} \cup \mathcal{A}^{r^*}_{\setminus}.
\end{equation*}
Finally, by orthogonality, we have for all $a \in \mathcal{A} \setminus (\mathcal{A}^{s^{*}} \cup \mathcal{A}^{r^{*}} \cup \mathcal{A}^\cap)$
\begin{equation*}
\transpose{\embed{a}}(\valuematrix \embed{s} + \valuematrix\embed{r}) = 0. 
\end{equation*}

Therefore $\text{argmax}_{\tilde{a}} \transpose{\embed{\tilde{a}}}(\frac{1}{2} \sum_{a \in A^{s}} \embed{a} + \frac{1}{2} \sum_{a \in A^{r}} \embed{a}) = a^{*}$ and we have completed the proof since this holds for arbitrary $(s, r)$.
\end {proof}

Intuitively, the full-rank nature of the embedding and value matrices enable us to use the value matrix as a key-value store which encodes the mapping $(s,r) \rightarrow a$.
%%%%%%%%%%%%%%%%%%%%%%%%%%%%%%%%%%%%%%%%%%%%%%%%%%%%%%%%%%%%%%%%%%%%%%%%%%%%%%%
%%%%%%%%%%%%%%%%%%%%%%%%%%%%%%%%%%%%%%%%%%%%%%%%%%%%%%%%%%%%%%%%%%%%%%%%%%%%%%%

\subsection{Proof of Theorem \ref{thm:hiddeninfo}}
\label{app:hiddeninformationproof}
In this theorem, we demonstrate that despite potentially having all factual associations encoded in the value matrix $W^{V}$, the attention weights can be modified such that information is suppressed from the output layer. Our construction here depends primarily on the attention scores becoming imbalanced against the subject token, which we will demonstrate occurs during fine-tuning in the next section. We introduce three additional assumptions regarding the structure of the value matrix and the fact distribution. These assumptions are largely minor in nature: Assumption \ref{assump:nonuniformlabel} simply requires that the activations $\transpose{\embed{a}} W^{V} \embed{s}$ are not all identical. In a similar vein, we assume that each answer is seen at least once in the dataset. Finally, as indicated in the hypothesis of the theorem, we assume that all the facts are memorized. Note that Assumption \ref{assump:allfactmemorized} is a significantly weaker consequence of the one-layer transformer achieving 100\% accuracy -- however it is all that is needed for the proof.

\begin{assumption}[Non-Uniform Relation Marginal]
    $\forall r \max_{a \in \mathcal{A}^{r}} \transpose{\embed{a}}\valuematrix \embed{p_{r}} -\min_{a \in \mathcal{A}^{r}} \transpose{\embed{a}} \valuematrix \embed{p_{r}} > 0$
    \label{assump:nonuniformlabel}
\end{assumption}

\begin{assumption}[Answer Diversity]
    $\forall r \,\, \forall a \in A^{r} \, \exists s \in \mathcal{S} \,\, \text{such that} (s,r,a) \in D_\text{pre}$
    \label{assump:ansdiversity}
\end{assumption}

\begin{assumption}[All Facts Memorized]
$\forall s \in \mathcal{S}$ $\forall r \in \mathcal{R}$  $\transpose{\embed{a}} W^{V} \embed{s} \geq \max_{a' \in \mathcal{A}^{r}} \transpose{\embed{a'}} \valuematrix \embed{r} -  \transpose{\embed{a}} W^{V} \embed{r}$
\label{assump:allfactmemorized}
\end{assumption}

For simplicity, we will additionally assume that all entries of the value matrix $W^{V}$ are greater than or equal to 0. This can be achieved by simply shifting all entries in the matrix, without changing the relative orderings of the activations.
\begin{theorem}[Attention imbalance can lead to hidden information] Consider any value matrix $\valuematrix$ satisfying assumptions \ref{assump:nonuniformlabel} throug \ref{assump:allfactmemorized}. Then a one-layer transformer with parameters $[W^{V},0]$ achieves 100\% accuracy but there exists $W^{QK}$ s.t. $f_{[\valuematrix,\kq]}([s,r])$ does not achieve 100\%.
\label{thm:hiddeninfo}
\end{theorem}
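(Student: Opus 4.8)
The plan is to exhibit, for the given value matrix $\valuematrix$, an explicit choice of $\kq$ that drives the attention weight on the subject token to (essentially) zero while keeping the attention weight on the prompt token near one, and to show that this collapse suppresses at least one correct prediction. Recall from Appendix \ref{app:notations} that on input $(s, p_r)$ the pre-softmax output decomposes as $\Att{s}\, \valuematrix \embed{s} + \Att{p_r}\, \valuematrix \embed{p_r}$ with $\Att{s} + \Att{p_r} = 1$. The first observation is that when $\Att{s} = 0$ the output depends only on $\valuematrix \embed{p_r}$, so the $\argmax$ over $\mathcal{A}^r$ is whichever $a$ maximizes $\transpose{\embed{a}} \valuematrix \embed{p_r}$; by Assumption \ref{assump:nonuniformlabel} this maximizer is not shared by every fact with relation $r$, so by Assumption \ref{assump:ansdiversity} there is some $(s, r, a) \in \dpre$ with $a$ \emph{not} the $p_r$-marginal maximizer, and for that fact the prediction is wrong once attention is fully on $p_r$. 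By continuity of the softmax, the same failure persists for $\Att{s}$ positive but small enough, and Assumption \ref{assump:allfactmemorized} (together with nonnegativity of $\valuematrix$) is exactly what guarantees that at $\Att{s} = 1$ the prediction \emph{is} correct, so there is a genuine threshold — this is the quantitative statement $\att{s} \le d / \salience{s}{r}{a}$ referenced in the main text.

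Next I would construct the $\kq$ realizing this. Since $\Att{s} = \sigma\big((\transpose{\embed{s}}\kq\embed{p_r}, \transpose{\embed{p_r}}\kq\embed{p_r})\big)_0$, by orthogonality of the embeddings the only two entries of $\kq$ that matter on input $(s,p_r)$ are $\kq_{s, p_r}$ and $\kq_{p_r, p_r}$. So I would set $\kq = -M \sum_{s \in \mathcal{S}} \embed{s}\transpose{\embed{p_{r(s)}}} + M' \sum_r \embed{p_r}\transpose{\embed{p_r}}$ for large constants $M, M'$ (or, more simply, a single large positive entry in each $(p_r, p_r)$ slot and large negative entries in the $(s, p_r)$ slots), which sends $\transpose{\embed{s}}\kq\embed{p_r} \to -\infty$ relative to $\transpose{\embed{p_r}}\kq\embed{p_r}$, hence $\Att{s} \to 0$ uniformly over $s$, for every relation $r$ simultaneously. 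One then picks $M, M'$ large enough that $\Att{s}$ falls below the threshold $d/\salience{s}{r}{a}$ for the offending fact identified above; finiteness of $\dpre$ makes this a finite set of inequalities. This handles the ``does not achieve $100\%$'' half.

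For the ``$[\valuematrix, 0]$ achieves $100\%$'' half: with $\kq = 0$ we get $\Att{s} = \Att{p_r} = \tfrac12$, so the pre-softmax output on $(s, p_r)$ is $\tfrac12(\valuematrix\embed{s} + \valuematrix\embed{p_r})$, and Assumption \ref{assump:allfactmemorized} says precisely that for the true answer $a$ we have $\transpose{\embed{a}}\valuematrix\embed{s} \ge \max_{a'\in\mathcal{A}^r}\transpose{\embed{a'}}\valuematrix\embed{p_r} - \transpose{\embed{a}}\valuematrix\embed{p_r}$, i.e. $\transpose{\embed{a}}\tfrac12(\valuematrix\embed{s}+\valuematrix\embed{p_r}) \ge \transpose{\embed{a'}}\tfrac12(\valuematrix\embed{p_r}) $ for the relevant competitors; combined with nonnegativity of $\valuematrix$ (which kills contributions from $a \notin \mathcal{A}^r$, since those appear only through $\valuematrix\embed{p_r}$ with a nonnegative coefficient that cannot exceed the winner) this gives $\argmax = a$. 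A small bit of care is needed to rule out ties and to handle answers outside $\mathcal{A}^r$, but this is the same bookkeeping as in the proof of Theorem \ref{thm:onelayermem}.

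I expect the main obstacle to be the continuity/threshold argument in the first paragraph: one must pin down the constant $d$ so that the statement ``$\Att{s} \le d/\salience{s}{r}{a}$ forces an error'' is actually true for \emph{some} fact with each relation $r$, which requires carefully comparing $\Att{s}\transpose{\embed{a}}\valuematrix\embed{s} + \Att{p_r}\transpose{\embed{a}}\valuematrix\embed{p_r}$ against the same expression with the $p_r$-marginal winner $a^\star$ in place of $a$, and using Assumption \ref{assump:nonuniformlabel} to get a strictly positive gap in the $\embed{p_r}$ term that a sufficiently small $\Att{s}$ cannot overcome. Everything else is linear algebra with orthonormal embeddings plus softmax monotonicity.
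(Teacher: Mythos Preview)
Your proposal is correct and follows essentially the same route as the paper: identify a fact $(s,r,a)$ whose true answer is not the $p_r$-marginal $\argmax$ (guaranteed by Assumptions~\ref{assump:nonuniformlabel}--\ref{assump:ansdiversity}), then push $\Att{s}$ below the threshold $d/\salience{s}{r}{a}$ so that the $p_r$-marginal winner overtakes $a$. The paper differs only cosmetically---it fixes $\transpose{\embed{p_r}}\kq\embed{p_r}=0$ and solves directly for $\transpose{\embed{s}}\kq\embed{p_r}\le\log(d/\salience{s}{r}{a})$ rather than invoking continuity, and it treats the ``$100\%$ at $\kq=0$'' claim as given by hypothesis (noting Assumption~\ref{assump:allfactmemorized} is a \emph{consequence} of full memorization, not a sufficient condition for it) rather than trying to derive it as you do.
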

\begin{proof}

We will construct a $\kq$ such that $f$ does not achieve 100\% accuracy. By Assumption \ref{assump:ansdiversity}, $\forall r$ we have that there is at least one $s$ such that $(s, r, a') \in D_\text{pre}$ where $a' \neq \argmax_{a \in \mathcal{A}} \transpose{\embed{a}} \valuematrix \embed{r}$.

We will refer to $D_{min} = \{(s,r,a) \in D_\text{pre} \, |\, a  \neq \argmax_{a \in \mathcal{A}} \transpose{\embed{a}} \valuematrix \embed{r}\}$ (i.e. the set of $(s,r,a)$ triples whose answers are not the most strongly encoded with respect to relation token $r$). We will show that we can construct \kq (without modifying $W^{V}$) such that all points in $D_{\text{min}}$ are incorrectly responded to.

Consider $(s,r,a) \in D_{\min}$. At balanced attention, we have that $\argmax_{a' \in \mathcal{A}} \transpose{\embed{a'}} (\valuematrix \embed{s} + \valuematrix \embed{r}) = a$ because the one-layer transformer achieves 100\% accuracy. However we also have that $\argmax_{a' \in \mathcal{A}} \transpose{\embed{a'}} \valuematrix \embed{r} \neq a$ by the construction of $D_{\min}$. Denote the last-token attention scores given an attention matrix $W^{QK}$ as $\begin{bmatrix} \textrm{Att}_{s} \\ \textrm{Att}_{r} \end{bmatrix} = \softmax{\begin{bmatrix} \transpose{\embed{s}} \kq \embed{r}\\ \transpose{\embed{r}} \kq \embed{r}\end{bmatrix}}$. We have that if $\text{Att}_{s} (\transpose{\embed{a}} \valuematrix \embed{s}) \leq \text{Att}_{r}(\max_{a' \in \mathcal{A}} \transpose{\embed{a'}} \valuematrix \embed{r} -  \transpose{\embed{a}} \valuematrix \embed{r})$ then $\ntp{\valuematrix}{\kq}{s}{r} \neq a$.  We can see this by rearranging this inequality, yielding
\begin{equation*}
\transpose{\embed{a}}( \text{Att}_{s} \valuematrix \embed{s} + \text{Att}_{r} \valuematrix \embed{r}) \leq \text{Att}_{r} \transpose{\embed{\tilde{a}}}\valuematrix \embed{r}) \leq\transpose{\embed{\tilde{a}}}(\text{Att}_{s} \valuematrix \embed{s} +  \text{Att}_{r} \valuematrix \embed{r}).
\end{equation*}
where $\tilde{a} = \argmax_{a \in \mathcal{A}} \transpose{\embed{a}} \valuematrix \embed{r}$. This implies that $\ntp{\valuematrix}{W_{KQ}}{s}{r} \neq a$.
We will term $\max_{a' \in \mathcal{A}^{r}} \transpose{\embed{a'}} \valuematrix \embed{r} -  \transpose{\embed{a}} \valuematrix \embed{r}$ as a relation specific constant $d$. Then we can achieve erasure of the fact $(s,r)$ by ensuring 
\begin{equation*}
\softmax{\begin{bmatrix}\transpose{\embed{s}} \kq \embed{r}\\ \transpose{\embed{r}} \kq \embed{r} \end{bmatrix}}_{0} \leq \softmax{\begin{bmatrix}\transpose{\embed{s}} \kq \embed{r}\\ \transpose{\embed{r}} \kq \embed{r} \end{bmatrix}}_{1}\frac{d}{\transpose{\embed{a}} \valuematrix \embed{s}},
\end{equation*}

 Since both the terms $\transpose{\embed{s}} \kq \embed{r}$ and $\transpose{\embed{r}} \kq \embed{r}$ are free variables, we will fix $\transpose{\embed{r}}\kq \embed{r} = 0$ without loss of generality and compute the required constraint on the term $\transpose{{\embed{s}}} \kq \embed{r}$. For convenience, we will use the abbreviation $c = \transpose{\embed{s}} \kq \embed{r}$.

Substituting these simplifications, we have the following inequality
\begin{equation*}
\frac{\exp{c}}{\exp{c}+1} \leq \frac{1}{\exp{c}+1} \frac{d}{\transpose{\embed{a}} \valuematrix \embed{s}}.
\end{equation*}

We have that setting $c = \transpose{\embed{s}} \kq\embed{r} \leq \log \frac{d_{r}}{\transpose{\embed{a}}\valuematrix \embed{s}}$ achieves this. This confirms our intuition that as fact salience $\transpose{\embed{a}}\valuematrix \embed{s}$ becomes large, we must set the entry $\transpose{\embed{s}} \kq \embed{r}$ increasingly negative to ensure that an incorrect answer is output.
\end{proof}

\subsection{Results on Token Learning}
\label{app:tokenlearning}
In this section, we establish some theory relating to the representations of tokens after pretraining. First, we prove the following theorems regarding bounded softmax functions. 

\begin{theorem}[Softmax on $\ell_{\infty}$ bounded vectors]
Consider $x \in \mathbb{R}^{d}$ and suppose $\norm{x}_{\infty} \leq C$. Then $\max_{i} (\sigma(x))_{i} \leq \frac{e^{2k}}{d-1}$ and $\min_{i} (\sigma(x))_{i} \geq \frac{e^{-2k}}{d}$
\label{thm:softmax}
\end{theorem}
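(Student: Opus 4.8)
The plan is to prove both inequalities directly from the closed form $\sigma(x)_i = e^{x_i}/\sum_{j=1}^d e^{x_j}$, using only the coordinatewise bounds $-C \le x_j \le C$ implied by $\|x\|_\infty \le C$ (I will write $k$ for the bound $C$ to match the statement). Since $t \mapsto e^t$ is monotone increasing, every exponential satisfies $e^{-k} \le e^{x_j} \le e^{k}$, so the whole argument collapses to bounding a ratio whose numerator and denominator are each pinned between $e^{-k}$- and $e^{k}$-type extremes.

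For the upper bound on $\max_i \sigma(x)_i$: fix a maximizing index $i$. The numerator obeys $e^{x_i} \le e^{k}$. For the denominator, discard the $i$-th term and lower-bound each of the remaining $d-1$ terms by $e^{-k}$, giving $\sum_j e^{x_j} \ge (d-1)e^{-k}$. Dividing yields $\sigma(x)_i \le e^{k}/\bigl((d-1)e^{-k}\bigr) = e^{2k}/(d-1)$, which is the claimed bound. (Keeping all $d$ terms in the denominator would even give the slightly stronger $e^{2k}/d$, but the stated form is all that is needed downstream in Theorem \ref{thm:tokenlearning}.)

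For the lower bound on $\min_i \sigma(x)_i$: fix a minimizing index $i$. Now the numerator obeys $e^{x_i} \ge e^{-k}$, while the denominator obeys $\sum_j e^{x_j} \le d\,e^{k}$ since each of the $d$ terms is at most $e^{k}$. Dividing gives $\sigma(x)_i \ge e^{-k}/\bigl(d\,e^{k}\bigr) = e^{-2k}/d$, as claimed.

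There is no substantive obstacle here; the only care needed is to pair the right extremes (largest numerator with smallest denominator for the upper bound, smallest numerator with largest denominator for the lower bound). It is worth recording in the proof that the numerator estimates $e^{-k} \le e^{x_i} \le e^{k}$ are \emph{uniform in $i$}, so in fact $e^{-2k}/d \le \sigma(x)_i \le e^{2k}/(d-1)$ holds for \emph{every} coordinate, not merely the extremal one; this per-coordinate version is the form actually invoked when controlling individual attention weights during the pretraining dynamics in Section \ref{subsec:freqissalience}.
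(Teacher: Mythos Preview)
Your proof is correct and follows essentially the same approach as the paper: bound the numerator and denominator of $\sigma(x)_i=e^{x_i}/\sum_j e^{x_j}$ separately using $e^{-k}\le e^{x_j}\le e^{k}$. The only cosmetic difference is that the paper keeps the $i$-th summand in the denominator at the same extreme as the numerator before simplifying, whereas you drop it (upper bound) or treat all $d$ summands uniformly (lower bound); both routes land on the identical final inequalities.
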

\begin{proof}
$\sigma(x)_{i} = \frac{\exp(x_{i})}{\sum\limits_{j \in d} \exp(x_{j})} \leq \frac{\exp(C)}{\exp(C)+(d-1)\exp(-C)} = \frac{\exp(2C)}{\exp(2C)+(d-1)} \leq \frac{\exp(2C)}{d-1}$. Likewise $\sigma(x)_{i} \geq \frac{\exp(-C)}{\exp(-C)+(d-1)\exp(C)} = \frac{\exp(-2C)}{\exp(-2C)+(d-1)} \geq \frac{\exp(-2C)}{d}$.
\end{proof}
Now we will prove a result on the activation of a token $t$,  $\valuematrix \embed{t}$ when trained by gradient descent with Cross Entropy loss with learning rate $\epsilon$ and updated $N$ times. We will make the following assumptions: 
\begin{assumption}[Attention Matrix Bounded]
$\forall t \in \mathcal{T}$ 
$\norm{\kq \embed{t}}_{\infty} \leq \frac{C_{KQ}}{2}$
\label{assump:bndatt}
\end{assumption}

\begin{assumption}[Value Matrix Bounded]
$\forall t \in \mathcal{T}$ 
 $\norm{\valuematrix \embed{t}}_{\infty} \leq \frac{C_V}{2}$
\label{assump:bndv}
\end{assumption}

We require that Assumptions \ref{assump:bndv} and \ref{assump:bndatt} hold throughout the training trajectory we consider. 
Now, consider a fixed token $t^{*} \in \mathcal{T}$ and a pretraining dataset $D_\text{pre}$. Let $D_{t} = \{(t^{*}, t_{1}, t_{a_1}),..., (t^{*}, t_{n}, t_{a_{n}})\}$ be all examples in $D_\text{pre}$ where $t^{*}$ occurs. Furthermore consider that $t_{a_{1}},...,t_{a_{n}} \in \mathcal{T}^{a}$ and that $|\mathcal{T}^{a}|=k$, i.e. that $T^{*} = \{t^{a}_{1},...,t^{a}_{k}\}$. Finally let $n_{i}$ denote the number of time $t^{a}_{i}$ appears in $D_{pre}$. Now we are ready to state the theorem. 
\begin{theorem}[Token Training Dynamics]
Consider training a one-layer transformer with parameters $ [\valuematrix,\kq]$ starting at $0$ initialization with batch size $1$ and learning rate $\epsilon$. Assume throughout training, we satisfy Assumptions \ref{assump:bndatt} and \ref{assump:bndv}. Then after one pass through $D_\text{pre}$ we have that $\forall t^{a}_{i} \in \mathcal{T}^{a}$ $ \transpose{\embed{t_{a_{n}}}} (W^{V} \embed{t^{*}}) \geq (\frac{n_{i} \exp(-C_{KQ})}{2} - n\frac{\exp(C_{V})}{|\mathcal{T}|-1} )\epsilon$. 
\label{thm:tok}
\end{theorem}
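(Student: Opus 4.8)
The plan is to track the single scalar entry $\transpose{\embed{t^a_i}}\valuematrix\embed{t^*}$ (which is the fact salience $\salience{t^*}{r}{t^a_i}$ of Section~\ref{subsec:thyfactsalience}, independent of $r$) over the one training epoch, starting from its initial value $0$, and to lower-bound the net change. First I would compute the per-example gradient. On a triple $(s,r,a)$ the pre-softmax output is $z = \att{s}\valuematrix\embed{s} + \att{r}\valuematrix\embed{r} = \valuematrix v$ with $v := \att{s}\embed{s} + \att{r}\embed{r}$, where $\att{s},\att{r}$ are the two convex attention weights, which depend on $\kq$ alone. The cross-entropy loss against the one-hot target $\embed{a}$ has $\nabla_z L = \sigma(z) - \embed{a}$, and since $v$ does not depend on $\valuematrix$, $\nabla_{\valuematrix}L = (\sigma(z)-\embed{a})\transpose{v}$, so one gradient step adds $\epsilon(\embed{a}-\sigma(z))\transpose{v}$ to $\valuematrix$. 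Reading off the $(t^a_i,t^*)$ coordinate and using orthonormality of the embeddings, the increment to $\transpose{\embed{t^a_i}}\valuematrix\embed{t^*}$ on this example equals $\epsilon\,(\transpose{v}\embed{t^*})\bigl(\mathbbm{1}\{a=t^a_i\}-\sigma(z)_{t^a_i}\bigr)$. Because $t^*\in\mathcal{S}$ while $r\in\mathcal{R}$ and these sets are disjoint, $\transpose{v}\embed{t^*}=\att{s}$ when $s=t^*$ and $\transpose{v}\embed{t^*}=0$ otherwise; thus only the $n$ updates coming from the examples of $D_t$ move this entry.

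Summing these $n$ increments over the epoch, with superscript $(j)$ denoting the parameters at the time of the $j$-th such update and starting from $\valuematrix=0$, gives
\begin{equation*}
\transpose{\embed{t^a_i}}\valuematrix\embed{t^*} \;=\; \epsilon\sum_{j=1}^{n}\att{s}^{(j)}\bigl(\mathbbm{1}\{a_j=t^a_i\}-\sigma(z^{(j)})_{t^a_i}\bigr) \;=\; \epsilon\Bigl(\sum_{j:\,a_j=t^a_i}\att{s}^{(j)} \;-\; \sum_{j=1}^{n}\att{s}^{(j)}\sigma(z^{(j)})_{t^a_i}\Bigr),
\end{equation*}
where the first sum ranges over the $n_i$ examples of $D_t$ whose answer is $t^a_i$ and the second over all $n$ of them. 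It then remains to bound $\att{s}^{(j)}$ and $\sigma(z^{(j)})_{t^a_i}$ uniformly along the trajectory via the two running assumptions.

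For the positive part, Assumption~\ref{assump:bndatt} forces both attention logits $\transpose{\embed{t^*}}\kq\embed{r}$ and $\transpose{\embed{r}}\kq\embed{r}$ into $[-C_{KQ}/2,\,C_{KQ}/2]$, so the softmax lower bound of Theorem~\ref{thm:softmax} in dimension $2$ gives $\att{s}^{(j)}\geq \tfrac12 e^{-C_{KQ}}$, hence $\sum_{j:\,a_j=t^a_i}\att{s}^{(j)}\geq \tfrac{n_i}{2}e^{-C_{KQ}}$. For the negative part, since $\att{s}^{(j)}+\att{r}^{(j)}=1$, the vector $z^{(j)}$ is a convex combination of $\valuematrix^{(j)}\embed{t^*}$ and $\valuematrix^{(j)}\embed{r}$, so $\norm{z^{(j)}}_\infty\leq C_V/2$ by Assumption~\ref{assump:bndv}, and the softmax upper bound of Theorem~\ref{thm:softmax} in dimension $|\mathcal{T}|$ gives $\sigma(z^{(j)})_{t^a_i}\leq e^{C_V}/(|\mathcal{T}|-1)$; with $\att{s}^{(j)}\leq 1$ this yields $\sum_{j=1}^{n}\att{s}^{(j)}\sigma(z^{(j)})_{t^a_i}\leq n\,e^{C_V}/(|\mathcal{T}|-1)$. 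Substituting both bounds into the display above gives $\transpose{\embed{t^a_i}}\valuematrix\embed{t^*}\geq \epsilon\bigl(\tfrac{n_i}{2}e^{-C_{KQ}} - \tfrac{n\,e^{C_V}}{|\mathcal{T}|-1}\bigr)$, the claimed bound.

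I do not anticipate a real obstacle; the two steps that need care are (i) observing that the attention weights are $\valuematrix$-independent constants inside the $\valuematrix$-gradient, which is what makes the per-coordinate update so transparent, and (ii) noting that it is exactly the convex-combination structure of $z$ (attention weights summing to one) that lets the uniform bound of Assumption~\ref{assump:bndv} control $\sigma(z^{(j)})$ at every step of the trajectory, including the interspersed updates on examples outside $D_t$. The remainder is bookkeeping over the epoch.
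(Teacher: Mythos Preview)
Your proposal is correct and follows essentially the same approach as the paper: compute the $\valuematrix$-gradient, observe via orthogonality that only the $n$ examples containing $t^{*}$ touch the column $\valuematrix\embed{t^{*}}$, then bound the $n_i$ positive increments below via the attention lower bound from Theorem~\ref{thm:softmax} and Assumption~\ref{assump:bndatt}, and the $n$ negative increments above via the convex-combination bound $\norm{z}_\infty\le C_V/2$ combined with the softmax upper bound from Theorem~\ref{thm:softmax}. If anything, your write-up is slightly more explicit than the paper's about why examples outside $D_t$ do not contribute and about the role of the convex structure of the attention weights.
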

\begin{proof}
    We have that the single gradient step update for $W_{V}$ on example $(t^{*}, t_{i}, t^{a}_{j})$ can be written:
    \begin{equation*}
    W_{V}^{T+1} = W_{V}^{T} + (\text{Att}_{t^{*}}(\embed{t_{j}^{a}} - \ntp{\valuematrix}{\kq}{t^{*}}{ t_{i}})\transpose{\embed{t^*}} + \text{Att}_{t_{i}} (\embed{t_{j}^{a}} - \ntp{\valuematrix}{\kq}{t^{*}}{t_{i}}) \transpose{\embed{t_{i}}})
    \end{equation*}
    where we denote the attention scores respectively on $t^{*}$ and $t_{i}$ as $\text{Att}_{t^{*}}$ and $\text{Att}_{t_{i}}$. Since we are primarily focused on the value matrix projection of $t^{*}$, we will discard the $t_{i}$ term in what follows (due to orthogonality).

    We will first establish an upper bound on the pre-softmax output of the transformer, $Z$, in terms of the $\ell_{\infty}$ norm. By expanding and applying the triangle inequality we have  
    \begin{equation*}\norm{Z}_{\infty} = \norm{\att{t^*} \valuematrix\embed{t^{*}} + \att{t_{i}} \valuematrix\embed{t_{i}}}_{\infty} \leq \att{t^{*}} \norm{ \valuematrix \embed{t^{*}}}_{\infty} + \att{t_{i}} \norm{\valuematrix \embed{t_{i}}}_{\infty}. 
    \end{equation*}
    
    By Assumption \ref{assump:bndv} we have that $\norm{\valuematrix \embed{t^{*}}}_{\infty} \leq \frac{C_{V}}{2}$ and $\norm{W^{V} \embed{t_{i}}}_{\infty} \leq \frac{C_{V}}{2}$. This implies that 
    \begin{equation*}
    \norm{Z}_{\infty} \leq \att{t^{*}} \norm{\valuematrix \embed{t^*}}_\infty + \att{t_{i}} \norm{\valuematrix \embed{t_{i}}}_\infty  \leq  \frac{C_{V}}{2} (\att{t^*}+\att{t_i}) = \frac{C_{V}}{2}
    \end{equation*}
    where the final equality comes from softmax property that $\att{t^*} + \att{t_{i}} = 1$
    
    Next observe that $\ntp{\valuematrix}{\kq}{t^*}{t_{i}} = \softmax{Z}$. We can then apply Theorem \ref{thm:softmax} to upper and lower bound the components of $\ntp{\valuematrix}{\kq}{t^*}{t_{i}}$. In particular, we have that $\forall i \in [0,|\mathcal{T}|]$
    \begin{equation*}
        \frac{\exp(-C_V)}{|\mathcal{T}|} \leq (\ntp{\valuematrix}{\kq}{t^{*}}{t_{i}})_{i} \leq \frac{\exp(C_V)}{|\mathcal{T}|-1}.
    \end{equation*}

    Now, we will examine the attention term, we have that 
    \begin{equation*}
    \begin{bmatrix} \att{t^{*}} \\ \att{t_{i}}  \end{bmatrix} = \softmax{\begin{bmatrix} \transpose{\embed{t^{*}}} \kq \embed{t_{i}} \\ 
    \transpose{\embed{t_{i}}} \kq \embed{t_{i}}
    \end{bmatrix}}.
    \end{equation*}

    By Assumption \ref{assump:bndatt} we have that 
    \begin{equation*}
    \norm{\begin{bmatrix} \transpose{\embed{t^{*}}} \kq \embed{t_{i}} \\ 
    \transpose{\embed{t_{i}}} \kq \embed{t_{i}} 
    \end{bmatrix}}_{\infty} \leq \frac{C_{KQ}}{2}
    \end{equation*}
    and thus, by Theorem \ref{thm:softmax} we have that for $t \in \{t^{*}, t_{i}\}$ we have that 
    \begin{equation*}
    \frac{\exp(-C_{KQ})}{2} \leq \att{t} \leq \exp(C_{KQ}).
    \end{equation*}
    
    Next for any $t_{i}^{a} \in \mathcal{T}^{a}$, the co-ordinate $\transpose{\embed{t_{i}^{a}}}(\valuematrix \embed{t^{*}})$ receives $n_{i}$ updates of the form $+\epsilon \att{t^{*}}$ and $n$ updates of the form $-\ntp{\valuematrix}{\kq}{t^{*}}{ t'} \att{s}$. We can then lower bound the quantity  $\transpose{\embed{t_{i}^{a}}}(\valuematrix \embed{t^{*}}) \geq n_{i}\epsilon \frac{\exp(-C_{QK})}{2} - n\frac{\exp(C_{V})}{|\mathcal{T}|-1} \epsilon$ (where we have upper bounded the attention score in the second term by $1$), thereby yielding the desired claim. 
\end{proof}
Now, we are ready to prove the Theorem \ref{thm:tokenlearning}, as a straightforward application of Theorem \ref{thm:tok}. We restate Theorem \ref{thm:tokenlearning} below for convenience. 
\begin{theorem}[Lower bound on fact salience]
Consider pretraining $\ntp{\valuematrix}{\kq}{s}{r}$ on a dataset $D_\text{pre}$ of size $N$ for one epoch with learning rate $\epsilon$. Suppose that the $\norm{W_{KQ}}_{\infty}<C_{QK}$ and $\norm{W_{V}}_{\infty}<C_{V}$ throughout training. Suppose that the combination $(s,r)$ appears $n$ times and $s$ appears no more than $s$ appears no more than $n_{tot}<n\frac{(|\mathcal{T}|-1)\exp(-C_{KQ})}{2\exp(C_{V})}$ times. Then  ($\transpose{\embed{a}})(W_{V} \embed{s}) \geq nc_{1}\epsilon$ where $c_{1} > 0$.
\label{thm:tokenlearningapp}

\end{theorem}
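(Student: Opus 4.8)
The plan is to obtain Theorem \ref{thm:tokenlearningapp} as an immediate corollary of the Token Training Dynamics result (Theorem \ref{thm:tok}); the only real work is translating notation between the two statements. I would instantiate Theorem \ref{thm:tok} at the fixed token $t^{*} = s$, first checking that the hypotheses transfer: the bounds $\norm{\kq}_{\infty} < C_{KQ}$ and $\norm{\valuematrix}_{\infty} < C_{V}$ assumed throughout training furnish the boundedness Assumptions \ref{assump:bndatt} and \ref{assump:bndv} (up to harmless relabeling of the constants), so Theorem \ref{thm:tok} applies along the entire one-epoch trajectory.

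Next I would set up the counts. In the notation of Theorem \ref{thm:tok}, the total number of pretraining examples in which $t^{*}$ occurs is exactly $n_{tot}$ --- the number of times $s$ appears in $\dpre$ --- and $\mathcal{T}^{a}$ is the set of answer tokens co-occurring with $s$. Because every pretraining sequence is consistent with the ground-truth triples in $D_{\text{k}}$, each of the $n$ occurrences of the prefix $(s,r)$ is completed by the answer $a$; hence $a \in \mathcal{T}^{a}$ and the per-answer count ``$n_{i}$'' for the index of $a$ is at least $n$. Substituting $t^{*}=s$, the bound $n_{i} \ge n$, and the upper bound $n_{tot}$ on the number of appearances of $t^{*}$ into the conclusion of Theorem \ref{thm:tok} yields, after one epoch,
\begin{equation*}
\salience{s}{r}{a} = \transpose{\embed{a}}(\valuematrix\embed{s}) \;\ge\; \left(\frac{n\exp(-C_{KQ})}{2} - n_{tot}\,\frac{\exp(C_{V})}{|\mathcal{T}|-1}\right)\epsilon .
\end{equation*}

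To finish, I would invoke the long-tail hypothesis $n_{tot} < n\frac{(|\mathcal{T}|-1)\exp(-C_{KQ})}{2\exp(C_{V})}$. Multiplying both sides by $\frac{\exp(C_{V})}{|\mathcal{T}|-1}$ gives $n_{tot}\frac{\exp(C_{V})}{|\mathcal{T}|-1} < \frac{n\exp(-C_{KQ})}{2}$, so the parenthesized quantity above is strictly positive; writing it as $n c_{1}$ with $c_{1} = \frac{\exp(-C_{KQ})}{2} - \frac{n_{tot}}{n}\cdot\frac{\exp(C_{V})}{|\mathcal{T}|-1} > 0$ gives $\salience{s}{r}{a} \ge n c_{1}\epsilon$, as claimed.

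Since Theorem \ref{thm:tok} does all the heavy lifting (bounding attention scores via the bounded-softmax lemma, separating the positive ``teaching'' updates from the negative ``interference'' updates on the $a$-coordinate of $\valuematrix\embed{s}$), the only genuine obstacles here are bookkeeping ones. The crucial one is the notational collision: ``$n$'' denotes the $(s,r)$-count in this statement but the $s$-count in Theorem \ref{thm:tok}, and one must be careful that the negative term scales with $n_{tot}$ --- every appearance of $s$, regardless of its relation, contributes one small negative softmax push to the $a$-coordinate --- while the positive term scales only with the $n$ occurrences that actually present $(s,r,a)$. The second point worth flagging is that $c_{1} > 0$, not merely $\ge 0$, relies precisely on the strictness of the long-tail hypothesis, and $c_{1}$ degrades toward $0$ as $n_{tot}/n$ approaches the threshold $\frac{(|\mathcal{T}|-1)\exp(-C_{KQ})}{2\exp(C_{V})}$; a constant uniform over facts therefore requires this ratio to stay bounded away from the threshold.
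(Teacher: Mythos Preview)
Your proposal is correct and follows exactly the paper's approach: instantiate Theorem~\ref{thm:tok} with $t^{*}=s$, identify $n_{tot}$ with the total count of $s$ and $n_i$ with the $(s,r)$-count $n$, then use the long-tail hypothesis to verify the resulting lower bound is strictly positive. Your treatment is in fact more careful than the paper's one-paragraph derivation, particularly in flagging the notational collision between the two uses of ``$n$'' and in making explicit how $c_1$ depends on the ratio $n_{tot}/n$.
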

\begin{proof}
We can simply treat the set $\{a|(s,r,a) \in D_{pre}\}$ as $\mathcal{T}^{a}$ in the theorem above. Then we have that $n_{i} = n$ (for the purposes of Theorem \ref{thm:tok}) and finally that $n<n_{tot}$. This implies that $\transpose{\embed{a}}(W^{V} \embed{s}) \geq (n\frac{\exp(-C_{KQ})}{2}-n_{tot}\frac{\exp(C_{V})}{|\mathcal{T}-1|}) \epsilon$. We have that by the hypothesis of the theorem, $(n\frac{\exp(-C_{KQ})}{2}-n_{tot}\frac{\exp(C_{V})}{|\mathcal{T}-1|}) >0$ which gives the desired result

\end{proof}

 \subsection{Results on Attention Dynamics}
 \label{app:att}
 Now, we examine the process by which attention is learned in the one-layer transformer. Preliminarily, we have that the update corresponding to the attention matrix $W_{KQ}$ on an abstract gradient update $(s,r,a)$ is
 \begin{equation*}
 \begin{split}
 -\frac{\partial L}{W_{KQ}} = c \underbrace{\transpose{((\embed{a}-\ntp{\valuematrix}{\kq}{s}{r})} (\valuematrix \embed{r})}_{\text{correlation of r with update}} \underbrace{(\embed{r}\transpose{\embed{r}} - \embed{s}\transpose{\embed{r}})}_{\text{increase attention on r}} + \\\underbrace{\transpose{(\embed{a}-\ntp{\valuematrix}{\kq}{s}{r})} (\valuematrix \embed{s})}_{\text{correlation of s with update}}\underbrace{(\embed{s}\transpose{\embed{r}}-\embed{r}\transpose{\embed{r}})}_{\text{increase attention on s}}
     \end{split}
\end{equation*}
where $c>0$, arising from the gradient of the softmax term. For subsequent convenience, we will condense the update as 
\begin{equation*}
-\frac{\partial L}{W_{KQ}} = \transpose{(\embed{a}-\ntp{\valuematrix}{\kq}{s}{r})}(\valuematrix \embed{r} -\valuematrix \embed{s}) (\embed{r}\transpose{\embed{r}}-\embed{s}\transpose{\embed{r}}).
\end{equation*}
On this basis, we now prove Theorem \ref{thm:attndynamics}, first restating it for convenience.

\begin{theorem}[Factuality vs. Nonfactuality Inducing Gradients]
When finetuning on a fact $(s,p_{r})$, if $s_{rel} - p_{rel}<0$ then the attention update  $-\frac{\partial L}{\partial W_{KQ}}$  globally decreases the attention on all $s' $ when prompting with $(s', p_r)$, whereas when $s_{rel} - p_{rel}>0$, $-\frac{\partial L}{\partial W_{KQ}}$ globally increases the attention on all $s'$.
\label{thm:attndynamics}
\end{theorem}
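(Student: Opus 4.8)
The plan is to read off the action of the rank-one-ish update $-\partialderiv{L}{\kq} = (\srel-\prel)(\embed{s}\transpose{\embed{p_{r}}} - \embed{p_{r}}\transpose{\embed{p_{r}}})$ directly on the attention logits for an arbitrary prompt $(s', p_r)$, and then conclude from the sign of $\srel - \prel$. First I would recall that, for the prompt $(s',p_r)$, the pre-softmax attention scores are $\transpose{\embed{s'}}\kq\embed{p_r}$ on the subject and $\transpose{\embed{p_r}}\kq\embed{p_r}$ on the prompt token, and that $\Att{s'}$ is the $0$-th softmax coordinate of these two logits, which is monotone increasing in the difference $\transpose{\embed{s'}}\kq\embed{p_r} - \transpose{\embed{p_r}}\kq\embed{p_r}$. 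So it suffices to track how a gradient step changes this difference.

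Next I would compute the effect of adding $\eta(\srel-\prel)(\embed{s}\transpose{\embed{p_{r}}} - \embed{p_{r}}\transpose{\embed{p_{r}}})$ to $\kq$ on each of the two logits, using orthonormality of the embeddings. The term $\embed{p_r}\transpose{\embed{p_r}}$ contributes $-\eta(\srel-\prel)$ to \emph{both} $\transpose{\embed{s'}}\kq\embed{p_r}$ (no — it contributes $0$ there, since $\transpose{\embed{s'}}\embed{p_r}=0$ when $s'\neq p_r$) and to $\transpose{\embed{p_r}}\kq\embed{p_r}$ (contributes $-\eta(\srel-\prel)$). The term $\embed{s}\transpose{\embed{p_r}}$ contributes $\eta(\srel-\prel)$ to $\transpose{\embed{s}}\kq\embed{p_r}$ only, i.e. it affects the subject logit of the \emph{single} prompt $(s,p_r)$ but not of $(s',p_r)$ for $s'\neq s$, and it does not touch $\transpose{\embed{p_r}}\kq\embed{p_r}$. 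Hence, for every $s'$ (including $s$ itself, where the two contributions combine additively in the difference), the change in the logit difference $\transpose{\embed{s'}}\kq\embed{p_r} - \transpose{\embed{p_r}}\kq\embed{p_r}$ is $+\eta(\srel-\prel)$ coming from the $-\embed{p_r}\transpose{\embed{p_r}}$ piece (plus, only when $s'=s$, an additional $+\eta(\srel-\prel)$). Either way the sign of the change in the difference, hence the sign of the change in $\Att{s'}$, matches the sign of $\srel-\prel$. So if $\srel-\prel<0$ the difference decreases for every $s'$, so $\Att{s'}$ decreases for every $s'$; if $\srel-\prel>0$ it increases for every $s'$. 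This is exactly the claim. I would note the "increase attention on $s$ vs. on $p_r$" annotations under the displayed gradient as the intuition, and emphasize via the $-\embed{p_r}\transpose{\embed{p_r}}$ term that the effect is \emph{global} in $s'$, not just local to the fact $(s,p_r)$ trained on — which is the point of the theorem and was already flagged in the main text's "Global Effect of $p_r$ Attention Updates" paragraph.

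The main obstacle — really the only place requiring care rather than bookkeeping — is handling the coordinate $s'=s$ correctly and confirming the monotonicity of the two-way softmax in the logit gap, plus being explicit that a single infinitesimal gradient step (or a step with sufficiently small $\eta$) is what is meant by "increases/decreases", since $\Att{s'}$ is a nonlinear function of $\kq$; the linearization argument is clean because only the two relevant logits move and the softmax restricted to two coordinates is strictly monotone in their difference. A secondary subtlety is the constant $c>0$ from the softmax Jacobian in the gradient expression, which I would absorb into the step size and which does not affect any sign. I would also remark that nothing in the argument used the specific token $p_r$ being a "prompt token" — only that the last token of the context is not subject-specific (so that the $\embed{p_r}\transpose{\embed{p_r}}$ term is shared across all $s'$), matching the closing remark in Section~\ref{subsec:supress}.
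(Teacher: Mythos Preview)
Your proposal is correct and follows essentially the same approach as the paper: both read off how the rank-one update $(\srel-\prel)(\embed{s}\transpose{\embed{p_r}}-\embed{p_r}\transpose{\embed{p_r}})$ shifts the two attention logits for an arbitrary prompt $(s',p_r)$ using orthonormality, and then invoke monotonicity of the two-way softmax. If anything, your version is slightly more careful than the paper's, since you explicitly handle the $s'=s$ case (where the logit difference moves by $2\eta(\srel-\prel)$ rather than $\eta(\srel-\prel)$) and note that the softmax is monotone in the logit \emph{gap}; your worry about needing $\eta$ small is unnecessary, though, since that monotonicity holds for any finite step size, not just infinitesimally.
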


\begin{proof}
Beginning with the update of the query-key matrix, we observe that when $s_{rel} - p_{rel}<0$, then we have (denoting the \emph{post-update} value matrix as $\kq^{'}$). 
\begin{equation*}
\transpose{\embed{r}} \kq^{'} \embed{r} = \transpose{\embed{r}} \kq \embed{r}+ t
\end{equation*}
where $t>0$. This follows from the update rule: 
\begin{equation*}
\kq^{'} = \kq + t(\embed{r}\transpose{\embed{r}} - \embed{s}\transpose{\embed{r}})
\end{equation*}
since the embeddings $\phi(r)$ are unit norm and we can omit the impact of the  $\embed{s}\transpose{\embed{r}}$ term by orthogonality. 

Next, to show the global nature of the update, we consider an arbitrary subject token $s'$ and compare the attention before and after the update on the prompt $[s',r]$. We will denote the pre-update subject token attention as $\att{s'}^{t}$ and the post-update subject token attention as $\att{s'}^{t+1}$. Then, we have
\begin{equation*}
\att{s'}^{t-1} = \frac{\exp(\transpose{\embed{s'}} \kq \embed{r})}{\exp(\transpose{\embed{s'}} \kq \embed{r}) + \exp(\transpose{\embed{p_{r}}} \kq \embed{p_{r}})}
\end{equation*}
and the attention on the subject after the update is 
\begin{equation*}
\att{s'}^{t} = \frac{\exp(\transpose{\embed{s'}} \kq \embed{r})}{\exp(\transpose{\embed{s'}} \kq \embed{r}) + \exp(\transpose{\embed{p_{r}}} \kq \embed{p_{r}} + t)}
\end{equation*}
Thus, by the monotonicity of $\exp$ and the fact that $f(x) = x^{-1}$ is decreasing, we have that $\att{s'}^{t-1} > \att{s'}^{t} $. In the second case $s_{rel}-p_{rel}>0$, the update to $\kq$ is $-t\embed{r}\transpose{\embed{r}}$ and we have the result by the same reasoning. 
\end{proof}

\section{Experimental Details}
\label{sec:expdetails}
\subsection{Hyperparameters and Tuning}
\begin{table}
\caption{Large Language Model Hyperparameters}
\label{table:hparam_llm}
\centering
\begin{tabular}{l l l}
    \toprule
    {Hyperparameter} &   {Range} \\
    \midrule                                           
    Learning Rate  &  1e-5, 1e-4, 1e-3     \\
    Weight Decay& 1e-6, 1e-5, 1e-4, 1e-3, 1e-2 \\
    (LoRA rank, LoRA $\alpha$)& (8,16), (16,32), (32, 64), (64,128)  \\
    LoRA & True, False\\
    \midrule
\end{tabular}
\end{table}

Across all experiments, we tune the following hyper-parameters on the ranges shown in Table \ref{table:hparam_llm} on a held-out validation set. The LoRA entry refers to selecting whether LoRA or full-finetuning is used. In all experiments, we found that LoRA achieved better validation performance than full-finetuning. We report the performance after tuning on a held-out validation set in all experiments. Tuning is performed individually for each fine-tuning dataset. 
\subsection{Evaluation}
\label{app:eval}
For short answer questions, we normalize the LLM output (generating up to 10 tokens), \texttt{llm\_out} by the following:

\texttt{llm\_out\_norm = llm\_out.lower().rstrip().lstrip()}

Given a list of possible ground-truth answers in the dataset \texttt{gt\_list}, we compute whether the LLM response is correct as:

\texttt{any([x in llm\_out\_norm for x in gt\_list])}

We consider the LLM's output on a multiple choice dataset as:

\texttt{torch.argmax(out\_scores[token\_set])}

where \texttt{out\_scores} are the next token prediction scores on and \texttt{token\_set} are the indices of the answer choice tokens (i.e. A, B, C, or D).
\label{sec:hparamtunings}
\subsection{Attention Mechanism Analysis}
\label{sec:attmechanismdetails}
We denote the subject attention as the maximum attention score over the tokens corresponding to the subject entity (relative to the final prompt token), as proposed in \cite{yuksekgonul2023attention}. The attention score for each layer is computed by averaging over all attention heads in that layer. On the left panel, we show the average attention score to the subject entity averaged over the PopQA test set. We observe, past the initial few layers, that the attention to the subject significantly drops in the model fine-tuned on FT-Bottom, providing mechanistic evidence of our hypothesis in a large language model. 

\section{Additional Experimental Results}
\label{sec:addexp}

\subsection{Numerical Experiments with One-Layer Transformer}
\label{app:numerical_experiment_1_layer}
\begin{figure}
    \centering
    \subfigure[Accuracy]{
\centering 
    \includegraphics[width =0.3\textwidth]{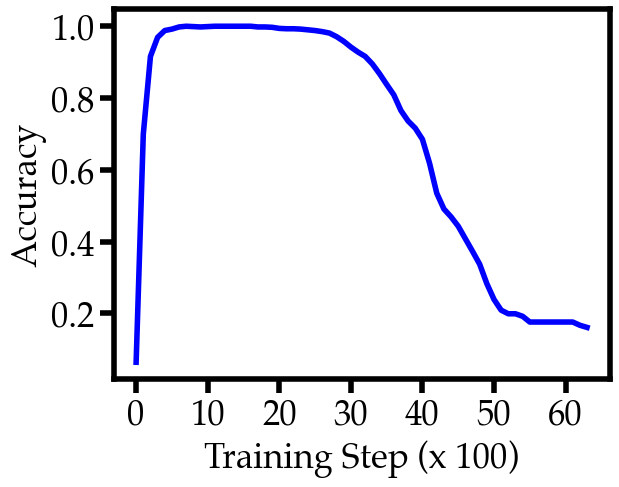}
}
\hfill
\subfigure[Subject Attention]{
    \centering
    \includegraphics[width = 0.3\textwidth]{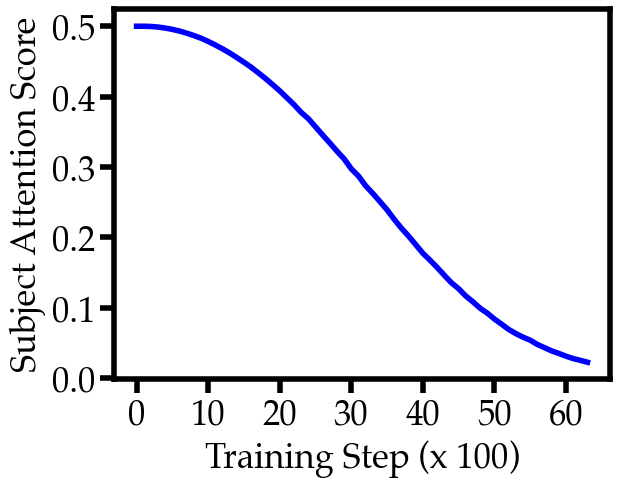}
}
\hfill
\subfigure[Relation Attention]{
\centering
    \includegraphics[width = 0.30\textwidth]{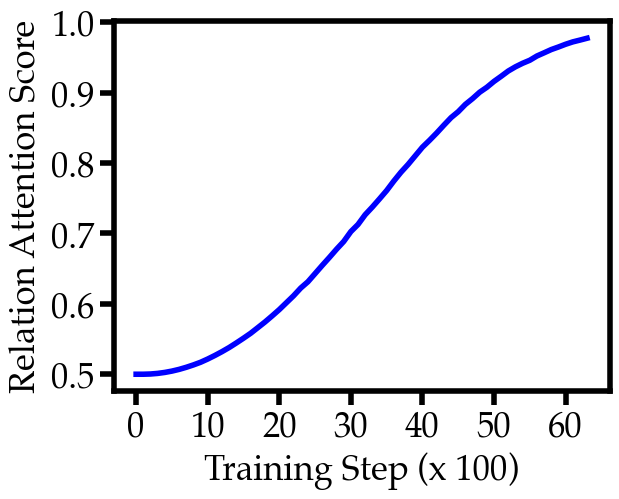}
}\hfill
\caption{\textbf{Numerical Results on One-Layer Transformer} We consider the same pretraining data distribution as previously but with training a one-layer transformer rather than a multi-layer transformer. This represents an exact simulation of the setting of our theoretical model from Section \ref{sec:thy}. In (a) we plot the factual accuracy as a function of the finetuning step. In (b), we plot the attention score on the subject token, and in (c) the attention score on the relation token.}
\end{figure}
In this section, we provide numerical evidence of our 1-layer transformer theory introduced in Section 4. We replicate our experiments in the synthetic language seen in Section 3 in a one-layer transformer. As introduced in \cite{li2023transformers}, we consider the initialization setting in which both the value and query-key matrices are initialized from a normal distribution with very low variance $(0.001)$. 

As a result of this configuration, the attention matrix initially is not updated (i.e. steps 1-10), because the magnitude of the update depends on the entries of the value matrix which are initially small. Later on in training, the attention matrices begin to update (in this case once all facts are learned). This results in the attention shifting. This can be seen as an example of \emph{two-stage} training dynamics for a one-layer transformer, as is observed in \cite{li2023transformers}.

In Figure \ref{app:numerical_experiment_1_layer}, we plot the attention to the relation token as a function of finetuning training steps. We observe that the attention to the relation token grows throughout training, eventually approaching $1$ (i.e. the one-layer transformer entirely ignores the subject token) and similarly, the subject attention declines to $0$. As a result of this, we see that the performance of the model \emph{steeply declines} as training continues, demonstrating "hidden information" caused by the imbalanced attention that we present in our theory.

\subsection{Additional Attention Maps}
Due to space constraints, we include additional attention maps of models trained on \texttt{FT-Top} and \texttt{FT-Bottom} in Figure \ref{fig:appadlattmap}. Overall, we observe a reduction of attention scores when evaluating on the \texttt{FT-Bottom} model, as we describe in Section \ref{sec:real}.
\begin{figure}
\centering
\includegraphics[width=0.7\textwidth]{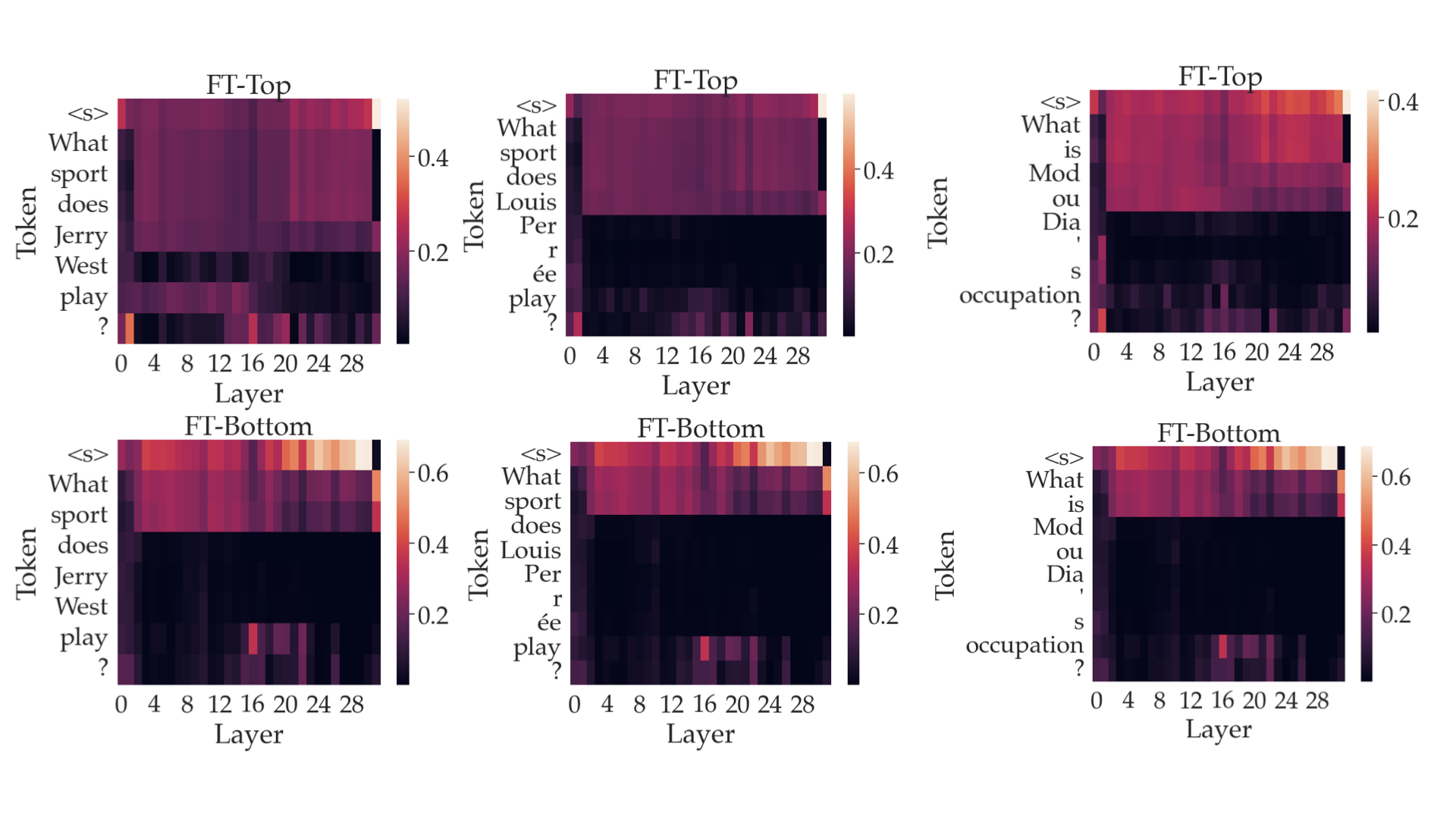}
\caption{\textbf{Additional Attention Maps} We include additional example-specific attention maps for models fine-tuned with \texttt{FT-Top} and \texttt{FT-Bottom}, respectively. Overall, we find the trends predicted by our analysis of the one-layer transformer continue to hold. In particular, 
 subject attention is markedly reduced for models that are fine-tuned on \texttt{FT-Bottom}.}
\label{fig:appadlattmap}
\end{figure}

\end{document}